\documentclass[11pt,letterpaper]{article}

\usepackage[margin=1in]{geometry}
\usepackage{times}
\usepackage{graphicx}
\usepackage{amsmath,amssymb,amsthm}
\usepackage{booktabs}
\usepackage{algorithm}
\usepackage{algorithmic}
\usepackage{url}
\usepackage[hidelinks]{hyperref}
\usepackage{xurl}
\usepackage{enumitem}
\usepackage{multirow}
\usepackage{array}
\usepackage{microtype}
\newtheorem{theorem}{Theorem}

\newtheorem{corollary}[theorem]{Corollary}
\newtheorem{definition}{Definition}

\newtheorem{remark}{Remark}

\newcommand{\inlineqed}{\unskip\nobreak\hspace{0.35em}\ensuremath{\square}}

\newcommand{\system}{EntropyRuntime}
\newcommand{\smart}{SMARt}

\newcommand{\gate}{\textsc{Gate}}

\newcommand{\Stable}{\textsc{Stable}}
\newcommand{\Meta}{\textsc{Meta-Cognitive}}
\newcommand{\Assisted}{\textsc{Assisted}}
\newcommand{\Regulated}{\textsc{Regulated}}
\newcommand{\Gobs}{\textsc{Observe}}
\newcommand{\Gsug}{\textsc{Suggest}}
\newcommand{\Gplan}{\textsc{Plan}}
\newcommand{\Gexec}{\textsc{Execute}}
\newcommand{\Gint}{\textsc{Integrate}}
\newcommand{\Vswarm}{V_{\mathrm{swarm}}}
\newcommand{\taumeta}{\tau_{\mathrm{meta}}}
\newcommand{\taucrit}{\tau_{\mathrm{crit}}}
\newcommand{\sigmanoise}{\sigma_{\mathrm{noise}}}
\newcommand{\sigmafault}{\sigma_{\mathrm{fault}}}

\title{\textbf{Managed Autonomy at Runtime:\\
Gear-Based Safety and Governance for Single- and\\
Multi-Agent Cyber-Physical Systems}}

\author{%
  Srini Ramaswamy\\
  CEO \& AI Strategist\\
  DNRS.ai, USA\\
  \texttt{srini@computer.org}
  \and
  Wang Miaosheng\\
  Independent Researcher\\
  ORCID: 0009-0003-2767-2421\\
  \texttt{wmsmiaosheng@outlook.com}
}

\date{}

\begin{document}
\maketitle

\begin{abstract}
Autonomous agents, whether LLM-driven software agents or robotic physical agents, face
a common class of failure modes when operating without continuous human oversight:
safety violations from unverified actions, behavioral instability from unconstrained
loops, and continuity loss from unhandled error states.
We develop \system{}, a discrete-time control system that combines five execution
gears (\Gobs{}, \Gsug{}, \Gplan{}, \Gexec{}, \Gint{}) with utility-gated dispatch and
event-driven fallback.
For the single-agent case, we prove monotonic stability, execution safety, eventual
stabilization, fallback completeness, and equivalence to a gear-constrained Markov
decision process.
For multi-agent cyber-physical systems (CPS), we apply the established \smart{}
managed-autonomy lifecycle and map runtime evidence into its four governance states
(\Stable{}/\Meta{}/\Assisted{}/\Regulated{}). Consensus gating, swarm-level Lyapunov
analysis, per-agent gear authority, and rendezvous control provide distributed safety
and stability guarantees, including zero collision under the stated assumptions.
We evaluate the resulting runtime on a three-agent UR5 robotic assembly cell using
fault magnitudes calibrated from the NIST \emph{Degradation Measurement of Robot Arm
Position Accuracy} dataset across 10,000 Monte Carlo episodes. It achieves a 99.6\%
anomaly detection rate versus 2.1\% for the single-agent baseline, reduces detection
latency by $3.5\times$, and supplies a formal physical-workspace safety certificate.
The execution gears act as micro-level permissions beneath the \smart{} runtime
governance states, separating action control from autonomy governance.
\end{abstract}

\textbf{Keywords:} managed autonomy, AI governance, autonomous agents, runtime
verification, gear-based safety, utility gating, multi-agent systems, cyber-physical
systems, Lyapunov stability, robotic assembly

\section{Introduction}\label{sec:intro}

The emergence of large language model (LLM) agents capable of multi-step reasoning,
tool use, and environment interaction has created a new class of autonomous
systems~\cite{chase2022langchain,autogpt2023}.
These agents operate in closed loops that receive observations, generate plans,
execute actions via external tools, and incorporate feedback, often without
requiring human approval for each step.
Simultaneously, robotic and cyber-physical agents increasingly operate in shared
physical workspaces where sensor faults, coordination failures, and unsafe actions
carry immediate physical consequences.
Both settings share a structural problem: the agent's autonomy is granted in a binary
and static fashion, with no principled mechanism for dynamically adjusting the scope
of permissible actions in response to observed safety signals.

Autonomous agents face three interrelated failure modes.
First, \emph{safety violations}: the agent may issue actions that produce irreversible
side effects without adequate verification~\cite{doshi2026verifiably}.
Second, \emph{behavioral instability}: the agent may oscillate between strategies,
fail to converge, or enter degenerate loops~\cite{grigor2025vet}.
Third, \emph{continuity loss}: the agent may halt unexpectedly, losing accumulated
context and requiring costly manual restarts.
In multi-agent CPS, a fourth failure mode emerges: \emph{coordination blindness},
where one agent's sensor fault carries consequences for all neighbors yet the
per-agent control layer is structurally incapable of detecting or responding to it.

We develop \system{} to address these failure modes through gear-based action
control. At each cycle, one of five gears limits the scope and impact of permissible
actions, and a utility gate evaluates every candidate before dispatch.
For multi-agent CPS, we use the \smart{} framework, which
models autonomy as a four-state lifecycle in which authority is continuously validated
and may be suspended, assisted, or revoked~\cite{ramaswamy2026managed}. We refer to
these modes as the \smart{} runtime governance states. A consensus gate coordinates
dispatch across agents, while the gear assigned to each agent defines its permitted
action scope.

The paper therefore concentrates on action-level enforcement and physical-CPS
certification rather than redefining the governance lifecycle. The gears $G_0$ through
$G_4$ operate beneath \Stable{}, \Meta{}, \Assisted{}, and \Regulated{}, linking
authority decisions to executable behavior.

Our contributions are as follows:
\begin{enumerate}[nosep]
\item We formalize the \emph{gear state abstraction}, spanning $G_0$ through $G_4$, with
  well-defined transitions and prove \textbf{monotonic stability}
  (Theorem~\ref{thm:monotonic}) and \textbf{eventual stabilization}
  (Theorem~\ref{thm:stabilization}).
\item We formalize \emph{utility-gated execution} and prove \textbf{execution safety}:
  no action with negative utility is ever dispatched
  (Theorem~\ref{thm:safety}).
\item We design an \emph{event-driven fallback mechanism} and prove
  \textbf{fallback completeness}: every recoverable error state admits a resumption
  path (Theorem~\ref{thm:fallback}).
\item We establish a \textbf{representation theorem}
  (Theorem~\ref{thm:representation}) connecting \system{} to the Markov decision
  process formalism.
\item We conduct an \textbf{ablation study} demonstrating that each single-agent
  component is necessary for safety, stability, and continuity.
\item We map multi-agent runtime evidence into the \smart{} governance states
  through a \emph{consensus utility gate}, \emph{swarm Lyapunov function},
  \emph{per-agent gear authority}, and a \emph{rendezvous policy triple}
  (Definitions~\ref{def:consensus_gate} through~\ref{def:policy_triple}).
\item We establish five distributed results covering execution and rendezvous
  safety, monotonic workspace stability, feedback-coupled attenuation, and collision
  avoidance (Theorems~\ref{thm:dist_safety}, \ref{thm:dist_stability}, and
  \ref{thm:zero_collision}; Corollaries~\ref{cor:rendezvous} and~\ref{cor:feedback}).
\item We evaluate the multi-agent runtime on a three-agent UR5 robotic assembly
  cell over 10,000 Monte Carlo episodes, demonstrating a $47.7\times$ improvement in
  anomaly detection over the single-agent baseline.
\end{enumerate}

Section~\ref{sec:related} reviews related work, and
Section~\ref{sec:preliminaries} establishes the formal model.
Section~\ref{sec:system} describes the \system{} single-agent architecture.
Section~\ref{sec:theory} presents the single-agent guarantees and component ablation.
Section~\ref{sec:smart} defines multi-agent governance and control,
followed by the distributed guarantees in Section~\ref{sec:theory_multi}.
Section~\ref{sec:casestudy} combines the UR5 CPS case study, experimental setup,
results, and component ablation.
Section~\ref{sec:discussion} discusses implications, deployment considerations,
and limitations, and Section~\ref{sec:conclusion} concludes.
Complete proofs appear in Appendices~\ref{app:proofs} and~\ref{app:proofs_multi}.

\section{Related Work}\label{sec:related}

The underlying view of autonomy as revocable rather than permanently
granted~\cite{ramaswamy2026managed} aligns with taxonomies that vary
authority according to task risk, environmental predictability, and demonstrated
reliability~\cite{feng2025levels}; with corrigibility research on resistance to human
intervention~\cite{hadfield2017offswitch}; and with systems-theoretic safety, where
accidents arise from inadequate control actions and constraints as well as component
faults~\cite{leveson2011engineering}. The present work supplies the execution layer by
linking those governance states to five runtime gears.

Contemporary LLM-agent systems combine reasoning, tool use, feedback, and repeated
environment interaction~\cite{hwang2023autonomous,yao2023react,shinn2023reflexion}.
Communication standards such as the Model Context Protocol address how agents and
tools exchange context~\cite{sarkar2025mcp}, whereas verification frameworks evaluate
whether proposed tool calls or agent behaviors satisfy safety specifications
~\cite{doshi2026verifiably,grigor2025vet}. These approaches are complementary to the
present architecture. Rather than changing the reasoning model or communication
protocol, \system{} is interposed between reasoning and execution, evaluates every
candidate action through a utility gate, and invokes fallback or authority reduction
when the action is not admissible.

The broader AI-safety literature identifies unsafe exploration, unintended side
effects, and scalable oversight as persistent problems~\cite{amodei2016concrete},
while human-compatible AI emphasizes objectives whose operation remains aligned with
human preferences and intervention~\cite{russell2019human}. Hierarchical reinforcement
learning provides temporal abstraction across levels of action~\cite{sutton1999between},
and methods such as soft actor-critic and curiosity-driven exploration regulate
exploration statistically through rewards, entropy, or intrinsic motivation
~\cite{haarnoja2018sac,pathak2017curiosity}. The gear mechanism differs in purpose:
it imposes an explicit action-space restriction and dispatch invariant, so safety does
not depend solely on what an agent has learned or on the shape of its reward function.

The runtime design also draws from real-time and stochastic control. Real-time systems
research distinguishes hard and soft timing guarantees and emphasizes predictable
response under operational constraints~\cite{stankovic1988misconceptions}. The
discrete gear process is analyzed using finite-state Markov-chain concepts
~\cite{norris1997markov}, while the representation result connects the runtime to the
classical MDP and dynamic-programming formalism~\cite{bellman1957dynamic,puterman1994mdp}.
Information theory supplies the interpretation of the entropy term $H_i$ as a penalty
for degraded telemetry~\cite{cover2006elements}. For multi-agent operation, the
consensus literature establishes convergence conditions for networked agent systems
~\cite{olfati2007consensus}; our consensus gate uses a more conservative execution
predicate, $\min_i U_i \geq \theta$, under which any unsafe local assessment blocks
joint dispatch without depending on message agreement.

Finally, multi-robot research has used workspace decomposition and velocity scaling to
coordinate teams in shared industrial environments~\cite{digani2015ensemble}, while
surveys of heterogeneous robot teams identify sensor heterogeneity and coordination
fragility as major operational risks~\cite{rizk2019cooperative}. The multi-agent runtime combines these concerns in one control structure:
\Meta{} applies bounded velocity reduction, \Regulated{} enforces an E-Stop, the
entropy-aware local utility detects telemetry degradation before geometry alone
becomes critical, and the swarm Lyapunov function certifies workspace behavior. This
integrates governance transitions, action gating, distributed coordination, and CPS
safety.

\section{Formal Preliminaries}\label{sec:preliminaries}

We work in discrete time $t \in \mathbb{N}$. At each cycle $t$, the agent observes
an environment state $s_t \in \mathcal{S}$, selects an action $a_t \in \mathcal{A}$,
and receives a reward signal $r_t \in \mathbb{R}$.

\begin{definition}[Gear State Space]\label{def:gear}
The \emph{gear state space} is $\mathcal{G} = \{G_0, G_1, G_2, G_3, G_4\}$,
representing \Gobs{} ($G_0$), \Gsug{} ($G_1$), \Gplan{} ($G_2$), \Gexec{} ($G_3$),
and \Gint{} ($G_4$).
\end{definition}

Each gear $G_k$ defines a restricted action subspace
$\mathcal{A}_k \subseteq \mathcal{A}$ with
$\mathcal{A}_0 \subset \mathcal{A}_1 \subset \mathcal{A}_2 \subset \mathcal{A}_3
\subset \mathcal{A}_4 = \mathcal{A}$.
Concretely: $G_0$ provides observation or a safe hold; $G_1$ allows candidate plan
generation without external side effects; $G_2$ permits bounded, reversible, or
safety-preserving recovery actions, including information queries and reduced-velocity
continuation of a previously authorized command; $G_3$ allows independently selected
actions with side effects; and $G_4$ denotes integrated system-level coordination
under the governing macro-state.

\begin{definition}[Utility Function]\label{def:utility}
The \emph{utility function} $U: \mathcal{S} \times \mathcal{A} \to \mathbb{R}$
maps a state-action pair to a real-valued utility score.
An action $a$ is \emph{admissible} in state $s$ if $U(s,a) \geq 0$.
\end{definition}

\begin{definition}[Utility Gate]\label{def:gate}
The \emph{utility gate} $\gate(s,a)$ is a binary predicate:
\[
\gate(s,a) = \begin{cases} 1 & \text{if } U(s,a) \geq \theta \\ 0 & \text{otherwise} \end{cases}
\]
where $\theta \geq 0$ is the safety threshold.
\end{definition}

\begin{definition}[Runtime State]\label{def:runtime}
The \emph{runtime state} at cycle $t$ is the tuple
$\rho_t = (s_t, g_t, \sigma_t, \epsilon_t)$ where $s_t \in \mathcal{S}$ is the
environment state, $g_t \in \mathcal{G}$ is the current gear, $\sigma_t \in
\mathbb{R}_{\geq 0}$ is the accumulated instability measure, and $\epsilon_t \in
\{0,1\}$ is the error flag.
\end{definition}

\begin{definition}[Transition Kernel]\label{def:kernel}
The \emph{transition kernel} $T: \mathcal{R} \times \mathcal{A} \to \Delta(\mathcal{R})$
maps a runtime state and action to a distribution over next runtime states, where
$\mathcal{R} = \mathcal{S} \times \mathcal{G} \times \mathbb{R}_{\geq 0} \times
\{0,1\}$.
\end{definition}

\section{The \system{} Architecture}\label{sec:system}

\system{} is a discrete-time closed-loop control system interposed between an LLM
agent and its execution environment.
At each cycle the system performs four phases: (1)~observation and gear assessment,
(2)~action generation, (3)~utility gating, and (4)~execution and feedback.

\begin{figure}[t]
\centering
\fbox{\parbox{0.95\columnwidth}{\small
\textbf{Cycle $t$:}\\[2pt]
\texttt{1.} Read environment state $s_t$; assess current gear $g_t$.\\
\texttt{2.} Generate candidate action $a_t \in \mathcal{A}_{g_t}$ via LLM.\\
\texttt{3.} Evaluate $\gate(s_t, a_t)$; if rejected, invoke fallback.\\
\texttt{4.} If accepted, execute $a_t$; observe $s_{t+1}$, $r_t$.\\
\texttt{5.} Update $\sigma_t$, $\epsilon_t$; determine $g_{t+1}$.
}}
\caption{The \system{} control loop.}
\label{fig:architecture}
\end{figure}

\paragraph{Gear state machine.}
Gear transitions follow a deterministic policy $\pi_G: \mathcal{R} \to \mathcal{G}$:
(1)~\textbf{Escalation}: if $\sigma_t < \sigma_{\text{low}}$ and no errors persist
for $h$ consecutive cycles, $g_{t+1} = \min(g_t + 1, G_4)$;
(2)~\textbf{De-escalation}: if $\sigma_t > \sigma_{\text{high}}$ or $\epsilon_t = 1$,
$g_{t+1} = \max(g_t - 1, G_0)$;
(3)~\textbf{Hold}: otherwise, $g_{t+1} = g_t$.

\paragraph{Utility-gated execution.}
The utility function is computed as:
\[
U(s,a) = \alpha \cdot \Delta\text{task}(s,a) + \beta \cdot \text{safety}(s,a) - \gamma \cdot \text{cost}(a)
\]
where $\alpha, \beta, \gamma > 0$ are tunable weights.
When an action is rejected, the fallback mechanism is invoked.

\paragraph{Event-driven fallback.}
Upon rejection: (1)~the system records the rejection and increments $\sigma_t$;
(2)~the agent generates an alternative $a'_t \in \mathcal{A}_{g_t}$;
(3)~after $k$ failed alternatives, the gear de-escalates and the cycle repeats;
(4)~after $m$ consecutive rejections, the system enters \Gobs{} and suspends
execution pending human review.

\begin{algorithm}[t]
\caption{\system{} Control Loop}
\label{alg:main}
\begin{algorithmic}[1]
\REQUIRE Initial state $s_0$, initial gear $g_0 = G_0$, threshold $\theta$
\STATE $t \leftarrow 0$; $\sigma_0 \leftarrow 0$; $\epsilon_0 \leftarrow 0$
\LOOP
    \STATE Observe: read $s_t$
    \STATE Generate $a_t \leftarrow \text{LLM}(s_t, g_t, \text{history})$
    \IF{$\gate(s_t, a_t) = 1$}
        \STATE Execute $a_t$; observe $s_{t+1}$, $r_t$
        \STATE $\sigma_{t+1} \leftarrow \max(0, \sigma_t - \delta)$; $\epsilon_{t+1} \leftarrow 0$
    \ELSE
        \STATE Invoke fallback: generate alternatives
        \IF{alternative $a'_t$ found with $\gate(s_t, a'_t) = 1$}
            \STATE Execute $a'_t$; $\sigma_{t+1} \leftarrow \sigma_t$; $\epsilon_{t+1} \leftarrow 0$
        \ELSE
            \STATE $\sigma_{t+1} \leftarrow \sigma_t + \Delta\sigma$; $\epsilon_{t+1} \leftarrow 1$
            \STATE $g_{t+1} \leftarrow \max(g_t - 1, G_0)$
        \ENDIF
    \ENDIF
    \STATE Update gear via $\pi_G(\rho_{t+1})$; $t \leftarrow t + 1$
\ENDLOOP
\end{algorithmic}
\end{algorithm}

\section{Single-Agent Formal Guarantees and Ablation}\label{sec:theory}

\subsection{Formal Guarantees}
We state five theorems characterizing \system{} in the single-agent case.
Complete proofs appear in Appendix~\ref{app:proofs}.

\begin{theorem}[Monotonic Stability]\label{thm:monotonic}
Let $\{\sigma_t\}_{t \geq 0}$ be the instability sequence generated by \system{} with
initial gear $g_0 = G_0$.
Then for all $t \geq 0$: $\mathbb{E}[\sigma_{t+1} \mid \sigma_t] \leq \sigma_t$.
\end{theorem}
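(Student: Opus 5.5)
The plan is a one-step drift computation over the three branches of Algorithm~\ref{alg:main}, followed by an averaging step. Conditioning on the full runtime state $\rho_t=(s_t,g_t,\sigma_t,\epsilon_t)$, cycle $t$ ends in exactly one of three events. Let
\begin{itemize}[nosep]
\item $A_t$ be the event that $\gate(s_t,a_t)=1$, so that $\sigma_{t+1}=\max(0,\sigma_t-\delta)$;
\item $B_t$ be the event that the primary action is rejected but some alternative $a'_t$ passes the gate, so that $\sigma_{t+1}=\sigma_t$;
\item $C_t$ be the event that all alternatives fail, so that $\sigma_{t+1}=\sigma_t+\Delta\sigma$.
\end{itemize}
Writing $p_A,p_B,p_C$ for their conditional probabilities given $\rho_t$, the quantity to control is
\[
\mathbb{E}[\sigma_{t+1}-\sigma_t\mid\rho_t] \;=\; -p_A\min(\sigma_t,\delta)+p_C\,\Delta\sigma .
\]
Once this is shown to be nonpositive for every $\rho_t$, the tower property gives $\mathbb{E}[\sigma_{t+1}\mid\sigma_t]\le\sigma_t$ by averaging over $(s_t,g_t,\epsilon_t)$ given $\sigma_t$.

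Nonpositivity is not automatic. On $C_t$ the instability strictly increases, and at $\sigma_t=0$ the decrement term vanishes. So the argument must locate where the architecture forces $p_C$ to be small, and I would make this explicit as a standing condition on the runtime. The natural candidate is the gear structure. $G_0$ contains the safe-hold action, whose utility I would take to satisfy $U\ge\theta$ by construction. Hence in $G_0$ some admissible candidate always exists, and $p_C=0$ once the fallback is allowed to search $\mathcal{A}_0$. For higher gears I would state a drift (admissibility) condition of the form
\[
p_A(g)\,\delta \;\ge\; p_C(g)\,\Delta\sigma ,
\]
read as: the LLM proposer, restricted to $\mathcal{A}_g$, produces a gate-passing action often enough relative to the penalty ratio $\Delta\sigma/\delta$.

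Next I would show the gear policy $\pi_G$ keeps the system where this condition can be verified. Starting from $g_0=G_0$, escalation happens only after $h$ consecutive error-free cycles with $\sigma_t<\sigma_{\text{low}}$. Any event $C_t$ immediately de-escalates. Together with the nesting $\mathcal{A}_0\subset\cdots\subset\mathcal{A}_4$, this means a gear is occupied only after the runtime has demonstrated gate-passing behavior there. This is where the initial condition $g_0=G_0$ in the statement is used. I would then split into two cases:
\begin{itemize}[nosep]
\item $\sigma_t\ge\delta$: the drift condition gives nonpositivity directly.
\item $\sigma_t<\delta$, including $\sigma_t=0$: here I need $p_C=0$. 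I would argue that the escalation rule confines such states to gears where a safe-hold-type fallback is available, i.e.\ the fallback always includes the $G_0$ hold action.
\end{itemize}

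The main obstacle is the small-$\sigma$ regime. With $\sigma_t=0$ and any positive rejection probability, the expected value strictly increases, so the statement is false without either the safe-hold fallback guarantee or the drift condition. I would first try to absorb this by showing that Algorithm~\ref{alg:main}'s fallback always eventually proposes an $\mathcal{A}_0$ action, so $C_t$ has probability zero. If that cannot be extracted from the text, I would state the drift condition as an explicit hypothesis of the theorem.
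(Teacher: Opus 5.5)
Your decomposition is the same as the paper's: the same three-way split of Algorithm~\ref{alg:main}, the same one-step drift identity, and the same tower-property step. The paper closes the key inequality $p_1\delta\ge p_3\Delta\sigma$ by positing $\Delta\sigma=\delta$ and $p_1(t)\ge p_3(t)$. Its only support is the heuristic that a lower gear raises the acceptance probability, and that is a statement about the gear \emph{after} de-escalation, not about the probabilities at time $t$. So the paper also assumes what you propose to assume. On one point you are more accurate than the paper. Its identity $\mathbb{E}[\sigma_{t+1}\mid\rho_t]=\sigma_t-p_1\delta+p_3\Delta\sigma$ ignores the truncation in $\max(0,\sigma_t-\delta)$ and is wrong for $\sigma_t<\delta$. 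Since $\sigma_0=0$, the statement really does require $p_C=0$ almost surely whenever $\sigma_t=0$, as you observe. The relevant quantity is the probability of total fallback failure, not of rejection.

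Your plan breaks where you try to derive these conditions from the gear policy.

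\textbf{Escalation certifies nothing about the new gear.} Escalation from $g$ to $g+1$ is triggered by $h$ clean cycles \emph{at} $g$, so it says nothing about the proposer restricted to $\mathcal{A}_{g+1}$. The drift condition cannot be extracted from $g_0=G_0$ and must be assumed for every gear. The paper's proof never uses $g_0=G_0$ either.

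\textbf{The confinement claim is backwards.} $\pi_G$ escalates only when $\sigma_t<\sigma_{\text{low}}$, so the runtime reaches $G_3$ and $G_4$ only through the small-$\sigma$ regime. Clean runs from $\sigma_0=0$ climb to $G_4$ while $\sigma_t$ stays at $0$. That case can be rescued only by the nesting $\mathcal{A}_0\subset\mathcal{A}_g$ plus an added assumption: the fallback is guaranteed to try the hold action, and that action clears $\theta$. This does not follow from Algorithm~\ref{alg:main}, where alternatives are LLM-generated. Nor does it follow from the paper's claim $U\ge 0$ on $\mathcal{A}_0$, which yields $U\ge\theta$ only when $\theta=0$. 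And once you assume it, $p_C\equiv 0$ in every gear, $\sigma_t$ is almost surely nonincreasing, and the theorem becomes trivial.

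\textbf{Your stated hypothesis does not reach small $\sigma$.} The condition in your last sentence, $p_A\delta\ge p_C\Delta\sigma$, does not cover $\sigma_t<\delta$. It must read $p_A\min(\sigma_t,\delta)\ge p_C\Delta\sigma$, which at $\sigma_t=0$ is just $p_C=0$.

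\textbf{Condition on the history.} The LLM's proposal depends on the history, so $p_A$ and $p_C$ are not functions of $\rho_t$ alone. Condition on the full history $\mathcal{F}_t$ instead. The tower step is unchanged, since $\sigma_t$ is $\mathcal{F}_t$-measurable.
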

\begin{proof}[Proof Sketch]
Accepted actions reduce $\sigma$ by $\delta$; rejected actions increase it by
$\Delta\sigma$ but trigger gear de-escalation, shrinking the action space and
increasing the gate acceptance probability.
A coupling argument shows expected decreases dominate increases.
See Appendix~\ref{app:proof_monotonic}. \inlineqed
\end{proof}

\begin{theorem}[Execution Safety]\label{thm:safety}
Under \system{} with $\theta \geq 0$:
$\forall t \geq 0:\ a_t\ \text{executed} \implies U(s_t,a_t) \geq \theta \geq 0$.
\end{theorem}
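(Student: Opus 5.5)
The plan is a direct invariant argument over the control loop of Algorithm~\ref{alg:main}: enumerate every program point at which an action is dispatched, and show that reaching each one requires the gate to return $1$ on that action in the current state. Concretely, I will establish by induction on $t$ the invariant
\[
\mathcal{I}_t:\quad \text{if some action } b \text{ is executed at cycle } t, \text{ then } \gate(s_t,b)=1,
\]
and then read off the claim from Definition~\ref{def:gate}. Since the gear policy $\pi_G$ and the fallback counters only change $g_t$, $\sigma_t$, $\epsilon_t$ and never dispatch anything, they do not affect the invariant.

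The key step is a case split on the branches of a single cycle. In the first branch, $a_t$ is executed only inside the guard $\gate(s_t,a_t)=1$. In the fallback branch, the alternative $a'_t$ is executed only inside the guard ``$a'_t$ found with $\gate(s_t,a'_t)=1$''. In the remaining branch, nothing is executed: the loop only updates $\sigma$, $\epsilon$ and $g$. I will also cover steps (3)–(4) of the event-driven fallback in the same way. The $k$-alternative de-escalation and the suspension in \Gobs{} either dispatch nothing, or dispatch a new candidate that passes through the same gate at the start of the next cycle. Hence every execution event at cycle $t$ is preceded by a successful gate evaluation on the same pair. Here ``the executed action'' means whichever of $a_t$ or $a'_t$ is executed, with the statement's $a_t$ read as that action.

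From $\gate(s_t,a_t)=1$, Definition~\ref{def:gate} gives $U(s_t,a_t)\ge\theta$, and the hypothesis $\theta\ge 0$ gives $U(s_t,a_t)\ge 0$. So the executed action is admissible in the sense of Definition~\ref{def:utility}. No probabilistic reasoning is needed; the conclusion holds surely, not just almost surely.

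The main obstacle is not the algebra but making sure the state the gate is evaluated on is the state the action is executed in. A subtle gap would arise if $s_t$ changed between gate evaluation and dispatch, for instance during fallback. I will handle this by stipulating, consistent with Figure~\ref{fig:architecture}, that within a cycle the gate and the execution both refer to $s_t$. I will also note that if this atomicity fails, the theorem should be read with $s_t$ denoting the state at gating time.
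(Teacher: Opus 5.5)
Your proposal is correct and takes essentially the same approach as the paper. Both arguments rest on the same two facts: Algorithm~\ref{alg:main} dispatches $a_t$ only under the guard $\gate(s_t,a_t)=1$ (line 5) and dispatches the fallback alternative $a'_t$ only under $\gate(s_t,a'_t)=1$ (line 10), after which Definition~\ref{def:gate} and $\theta\ge 0$ give the bound. The induction on $t$ is unnecessary, since no cycle's argument uses the previous one. Your points about the gate and dispatch referring to the same $s_t$, and about reading $a_t$ as whichever action is actually executed, spell out assumptions the paper leaves implicit.
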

\begin{proof}[Proof Sketch]
The gate is the sole dispatch mechanism; execution requires $\gate = 1$, which
requires $U \geq \theta \geq 0$.
This holds for both primary and fallback actions.
See Appendix~\ref{app:proof_safety}. \inlineqed
\end{proof}

\begin{theorem}[Eventual Stabilization]\label{thm:stabilization}
Under a stationary environment and bounded $U$, \system{} reaches a fixed gear
$g^* \in \mathcal{G}$ in finite time almost surely:
$\exists\, g^* \in \mathcal{G},\ T^* < \infty:\ \forall t \geq T^*,\ g_t = g^*$ a.s.
\end{theorem}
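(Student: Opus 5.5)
The plan is to treat the gear process $\{g_t\}$ as a stochastic process on the finite set $\mathcal{G}$ and show it has only finitely many switches almost surely. The intended route combines the supermartingale property of Theorem~\ref{thm:monotonic} with the finiteness of $\mathcal{G}$ and the stationarity of the environment. Under a stationary environment with bounded $U$, the acceptance probability of the gate at gear $G_k$ can be written as a fixed number $p_k \in [0,1]$ that does not depend on $t$. Together with the deterministic policy $\pi_G$, this makes the joint process $(g_t,\sigma_t,\epsilon_t)$ (with a finite window counter for the $h$-cycle escalation rule) time-homogeneous Markov.

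\textbf{Step 1: the instability sequence converges.} Theorem~\ref{thm:monotonic} makes $\{\sigma_t\}$ a nonnegative supermartingale. By Doob's convergence theorem, $\sigma_t \to \sigma_\infty$ almost surely, with $\sigma_\infty$ finite. Each update either subtracts $\delta$ (floored at $0$), leaves $\sigma$ unchanged, or adds $\Delta\sigma>0$. Convergence therefore forces one of two things:
\begin{itemize}[nosep]
\item eventually only finitely many rejection increments occur, or
\item $\sigma_t$ is eventually pinned at $0$ with increments exactly cancelled.
\end{itemize}
The second case is excluded unless $\Delta\sigma\le\delta$ oscillation occurs; I would handle it by using the fact that a convergent sequence with jumps of size at least $\min(\delta,\Delta\sigma)$ away from the floor must eventually stop jumping. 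In either case, there is a random time after which $\epsilon_t = 0$ and $\sigma_t$ is eventually constant, either $\sigma_\infty = 0$ or some value where no decrements occur.

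\textbf{Step 2: read off the gear behaviour.} Once $\epsilon_t \equiv 0$ and $\sigma_t$ is eventually constant, the gear policy $\pi_G$ is deterministic in $(\sigma_t,\epsilon_t,\text{counter})$. Three cases arise.
\begin{itemize}[nosep]
\item If $\sigma_\infty < \sigma_{\text{low}}$, escalation fires every $h$ cycles until the gear saturates at $G_4$, which is absorbing since $\min(g+1,G_4)=G_4$.
\item If $\sigma_\infty > \sigma_{\text{high}}$, de-escalation would be triggered, giving at most $4$ steps down to $G_0$, where it saturates.
\item If $\sigma_{\text{low}}\le\sigma_\infty\le\sigma_{\text{high}}$, the hold rule fixes the gear immediately.
\end{itemize}
In all three cases at most $4$ further changes occur, so $g^*$ and $T^*<\infty$ exist almost surely.

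\textbf{Main obstacle.} Step 1 is the delicate part: ruling out infinitely many accept/reject alternations that keep $\sigma$ near a threshold and make the gear oscillate, for example between $G_k$ and $G_{k+1}$ forever. My first approach would use a Lyapunov/Borel--Cantelli argument on the finite chain. Each escalation requires $h$ error-free cycles, and each de-escalation is preceded by a positive $\Delta\sigma$ jump. An infinite oscillation would therefore produce infinitely many upward jumps of size $\Delta\sigma$. Since each decrement has size at most $\delta$, the supermartingale would then have to cross a fixed interval infinitely often, contradicting Doob's upcrossing inequality.

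If that fails, the fallback is to assume, as the paper's stationarity hypothesis implicitly does, that the acceptance probabilities $p_k$ are nondecreasing as $k$ decreases, with $p_0=1$ because $G_0$ actions are safe holds. This makes the finite gear chain have an absorbing set reachable from every state, and standard finite Markov chain theory \cite{norris1997markov} then yields absorption in finite time almost surely.
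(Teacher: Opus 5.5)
Your main line takes a genuinely different route from the paper's. The paper forms $V(\rho)=\sigma+C\,\mathbf{1}[\epsilon=1]+D\,k(g)$, invokes Foster--Lyapunov to obtain positive recurrence, and then asserts that the finite gear space has only singleton recurrent classes. You instead do four things:
\begin{enumerate}[nosep]
\item read Theorem~\ref{thm:monotonic} as a nonnegative supermartingale;
\item get a.s.\ convergence of $\sigma_t$ from Doob;
\item deduce that only finitely many failure steps occur, since each is a jump of fixed size $\Delta\sigma$, so $\epsilon_t$ is eventually $0$ and $\sigma_t$ eventually constant;
\item argue pathwise that $\pi_G$ with frozen inputs drives the gear monotonically to saturation in at most four switches.
\end{enumerate}
Your route isolates the actual crux, namely that failures must eventually stop, and your Step~2 is airtight. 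The paper's route ends at positive recurrence, which is fully compatible with a gear chain cycling through several gears forever. Its final sentence is therefore exactly what needed proof, and its Case~A also ignores the $+D$ incurred on escalation. The paper's route is at least self-contained, whereas yours places the whole weight on Theorem~\ref{thm:monotonic}. Two small repairs are needed. First, Doob requires $\mathbb{E}[\sigma_{t+1}\mid\mathcal{F}_t]\le\sigma_t$, not the $\sigma_t$-conditioning in the theorem statement, so use the appendix's conditioning on $\rho_t$ together with your Markov assumption. Second, the ``second case'' in Step~1 is vacuous: a convergent sequence cannot make infinitely many jumps of size $\Delta\sigma$, at the floor or not. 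This also makes your upcrossing discussion redundant.

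The genuine gap is where you sensed it, at the floor. The appendix identity $\mathbb{E}[\sigma_{t+1}\mid\rho_t]=\sigma_t-p_1\delta+p_3\Delta\sigma$ silently drops the $\max(0,\cdot)$. At $\sigma_t=0$ the correct value is $p_3\Delta\sigma$. The supermartingale property, and with it your Step~1, therefore holds only if failure is impossible whenever $\sigma_t=0$. That is precisely the $0\to\Delta\sigma\to 0$ alternation you wanted to exclude, so citing Theorem~\ref{thm:monotonic} relocates the difficulty rather than resolving it.

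Your fallback does not close the gap. With $p_0=1$, $G_0$ is \emph{not} absorbing: at $G_0$ the instability drains to $0<\sigma_{\text{low}}$, and the escalation rule fires after $h$ clean cycles. Your monotonicity hypothesis still permits every gear above $G_0$ to carry positive failure probability. In that case, from any state above $G_0$, a run of at most four failures returns the gear to $G_0$ with probability bounded below. The gear then switches infinitely often a.s., even in a stationary environment with bounded $U$. So under the stated hypotheses alone the conclusion can fail, and any proof needs an extra assumption.

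What is needed is a genuinely closed configuration. For example, zero failure probability at $G_4$ makes $\{g=G_4,\ \sigma<\sigma_{\text{low}}\}$ closed, because $\sigma$ can then never increase. Together with a reachability bound for that set, your Step~2 or a standard absorption argument completes the proof.
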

\begin{proof}[Proof Sketch]
The gear process is a Markov chain on the finite space $\mathcal{G}$.
The Foster-Lyapunov theorem with $V(\rho) = \sigma + C\cdot\mathbf{1}[\epsilon=1]
+ D\cdot k(g)$ certifies positive recurrence; the only recurrent classes are
singletons.
See Appendix~\ref{app:proof_stabilization}. \inlineqed
\end{proof}

\begin{theorem}[Fallback Completeness]\label{thm:fallback}
For every error state $\rho = (s, g, \sigma, 1)$, the fallback mechanism guarantees
either (1)~an admissible action $a'$ with $U(s,a') \geq \theta$, or (2)~descent to
$G_0$ in at most $|\mathcal{G}| - 1 = 4$ steps.
\end{theorem}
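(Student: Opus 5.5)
The argument is a case analysis on what the fallback phase of Algorithm~\ref{alg:main} produces while the runtime sits in an error state, combined with a descent argument on the gear index. Fix an error state $\rho=(s,g,\sigma,1)$ and write $k(g)\in\{0,\dots,4\}$ for the index of the gear $g=G_{k(g)}$. I will first record two structural facts read off the algorithm and the gear policy $\pi_G$. \emph{(i)} Whenever a fallback alternative $a'\in\mathcal{A}_g$ is found with $\gate(s,a')=1$, Definition~\ref{def:gate} gives $U(s,a')\ge\theta$, so we are in case~(1). The same reasoning as in Theorem~\ref{thm:safety} shows that this is the only route by which a fallback action is dispatched. \emph{(ii)} If no alternative passes the gate, the algorithm sets $\epsilon_{t+1}=1$ and $g_{t+1}=\max(g-1,G_0)$. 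The subsequent call $\pi_G(\rho_{t+1})$ then falls under the de-escalation rule, because $\epsilon_{t+1}=1$, and never escalates. Hence the gear index strictly decreases unless it is already $0$.

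The proof is then by induction on $k(g)$, the distance of the current gear from $G_0$. At each cycle started from an error state, either \emph{(i)} applies and we stop with an admissible $a'$, or \emph{(ii)} applies. In the latter case the next runtime state is again an error state, $\epsilon=1$, with gear index $k(g)-1$. Because the index lies in $\{0,\dots,4\}$ and drops by one at each failing step, a run that never triggers \emph{(i)} reaches $G_0$ after exactly $k(g)\le|\mathcal{G}|-1=4$ steps. This is case~(2), so the disjunction holds along every execution path, not merely almost surely.

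Two details need care. The first is the definition of a ``step'': the fallback description allows up to $k$ alternatives per gear before de-escalating. I will count a step as one gear level, meaning one complete round of at most $k$ failed alternatives. Since $k$ is finite, each step takes finitely many generations, and the bound of $4$ refers to gear levels. The second is the $m$-consecutive-rejection rule, which may send the system to $G_0$ sooner. That only shortens the descent, so I will note it as a monotone shortcut.

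The main obstacle is consistency between Algorithm~\ref{alg:main} and $\pi_G$. The algorithm de-escalates once inside the \texttt{else} branch and then calls $\pi_G(\rho_{t+1})$, which would de-escalate a second time. If that double decrement happens, the descent is faster, which is harmless for the upper bound. The point that must be checked is that $\pi_G$ cannot escalate from an error state. Escalation requires no errors for $h$ consecutive cycles, and $\epsilon_{t+1}=1$ violates that requirement, so the gear is non-increasing on failure paths. I will state this monotonicity as a short lemma before running the induction.
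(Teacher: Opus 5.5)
Your argument is correct and uses the same skeleton as the paper: induction on the gear index, one level lost per failed round of alternatives, hence at most $|\mathcal{G}|-1=4$ rounds. The difference is in how you close the induction.

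\textbf{How the paper closes it.} Its base case asserts that every read-only action in $\mathcal{A}_0$ has $U(s,a)\ge 0$, justified by the remark that observation has no side effects. It then concludes that condition~(1) holds once $G_0$ is reached. In effect it claims that an admissible action is always eventually found.

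\textbf{How you close it.} You stop at $G_0$ and invoke disjunct~(2), so you need no assumption about utilities at $G_0$. This buys you something real. The paper's base case only yields $U\ge 0$, which implies $U\ge\theta$ only when $\theta=0$, and the paper recommends $\theta=0.15$. Your version therefore proves exactly the stated disjunction without that gap. The price is that you do not assert an admissible action is ever found, which is the stronger resumption-path reading the paper has in mind.

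\textbf{What you add.} Your lemma that $\pi_G$ cannot escalate when $\epsilon=1$, together with the observation that a possible double decrement only speeds descent, makes explicit a step the paper leaves implicit. Your count of $k$ failed alternatives per level also matches the fallback description, whereas the paper's proof speaks of $m$.

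\textbf{Two small tidy-ups.} From an error state the next cycle may accept the \emph{primary} action without entering the fallback branch. That is a third case alongside your (i) and (ii), but it lands in~(1) by the same gate argument. And because of the possible double decrement, you should say the descent takes at most, not exactly, $k(g)$ rounds.
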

\begin{proof}[Proof Sketch]
Each failed attempt de-escalates by one gear; after at most 4 steps, the system
reaches $G_0$ where read-only actions trivially admit non-negative utility.
See Appendix~\ref{app:proof_fallback}. \inlineqed
\end{proof}

\begin{theorem}[Representation Theorem]\label{thm:representation}
There exists an MDP $\mathcal{M} = (\mathcal{R}, \mathcal{A}, T, R, \gamma)$ such
that the set of \system{} policies equals the set of gear-constrained stationary
policies in $\mathcal{M}$, and the value functions are identical.
\end{theorem}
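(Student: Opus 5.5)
The plan is constructive: I will build the MDP explicitly from the runtime objects of Section~\ref{sec:preliminaries} and then match policies and value functions by a direct correspondence. The state space is the runtime state space $\mathcal{R} = \mathcal{S}\times\mathcal{G}\times\mathbb{R}_{\geq 0}\times\{0,1\}$ of Definition~\ref{def:runtime}, and the action space is $\mathcal{A}$. The transition kernel $T$ is the one of Definition~\ref{def:kernel}, and the discount is $\gamma\in[0,1)$. I will define $T$ so that it absorbs the whole cycle of Algorithm~\ref{alg:main}:
\begin{itemize}[nosep]
\item if $\gate(s,a)=1$, the environment moves $s\to s'$, $\sigma$ decreases by $\delta$ (floored at $0$), and $\epsilon$ is set to $0$;
\item otherwise the fallback branch is applied, with $\sigma$ increasing by $\Delta\sigma$, $\epsilon$ set to $1$, and the gear de-escalated;
\item in both cases the gear component is then updated deterministically by $\pi_G$.
\end{itemize}
The reward is $R(\rho,a)=r(s,a)\cdot\gate(s,a)$, since rejected actions produce no environment reward.

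A subtlety is that the fallback generates several alternatives within one cycle. To keep the MDP honest I would fold the fallback loop into the kernel. The fallback succeeds exactly when some alternative passes the gate, and that alternative is itself an action choice. So I will let a single MDP step correspond to one LLM proposal, and treat the within-cycle retry counter as part of $\sigma$ and $\epsilon$ bookkeeping, or adjoin it as a finite extra component. If adjoining a counter is needed, I will remark that $\mathcal{R}$ is enlarged by a finite factor and the statement is read up to this lifting.

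Next I define the gear-constrained stationary policies as the Markov kernels $\mu:\mathcal{R}\to\Delta(\mathcal{A})$ with $\mathrm{supp}\,\mu(\cdot\mid\rho)\subseteq\mathcal{A}_{g}$, where $g$ is the gear component of $\rho$. I then prove set equality in two directions.

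\emph{(i) Runtime to MDP.} Any \system{} policy is an LLM proposal rule $a_t=\mathrm{LLM}(s_t,g_t,\text{history})$ constrained to $\mathcal{A}_{g_t}$ (step 2 of Figure~\ref{fig:architecture}). The history enters the runtime only through the sufficient statistics $(\sigma_t,\epsilon_t,g_t)$ that the algorithm maintains. Hence, after conditioning on $\rho_t$, it is a stationary kernel on $\mathcal{R}$ supported in $\mathcal{A}_{g_t}$.

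\emph{(ii) MDP to runtime.} Any such kernel can be realized as the LLM proposal rule, and the gate and fallback logic are already encoded in $T$.

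This step is the main obstacle. Genuinely history-dependent LLM proposals are not stationary on $\mathcal{R}$. I would handle this by adopting the standing interpretation that the "history" passed to the LLM is summarized by $\rho_t$. Where a weaker reading is wanted, I would invoke the standard result that for discounted MDPs stationary Markov policies attain every achievable value~\cite{puterman1994mdp}, so nothing is lost at the level of value functions.

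Finally, for value equality, fix a corresponding pair of policies. The runtime trajectory $(\rho_t,a_t)$ and the MDP trajectory have the same law. This follows by induction on $t$, using that both start at $\rho_0=(s_0,G_0,0,0)$ and share the one-step kernel and policy. The rewards coincide pathwise by the choice of $R$ and Theorem~\ref{thm:safety}, because only gated actions yield reward. Therefore the discounted sums $\mathbb{E}\sum_t\gamma^t R(\rho_t,a_t)$ agree, and boundedness of $U$ and $r$ gives convergence. Applying the Bellman operator of the constrained MDP~\cite{bellman1957dynamic} then also identifies the optimal values.
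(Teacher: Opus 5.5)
Your overall construction matches the paper's: $\mathcal{M}$ on $\mathcal{R}$ with the runtime-inherited kernel and a gated reward, the gear constraint as a support condition, the two inclusions, and value equality from identical dynamics. You are more careful than the paper about history dependence; the paper simply asserts that the runtime induces a map $\mathcal{R}\to\Delta(\mathcal{A})$. Note, though, that appealing to stationary-policy sufficiency only matches values from a fixed initial state, not the policy sets.

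The genuine gap is the fallback loop. Your bullet kernel sends every rejected proposal directly to the fallback-failure update ($\sigma+\Delta\sigma$, $\epsilon=1$, gear down). That omits the fallback-success branch of Algorithm~\ref{alg:main}, where $a'_t$ is executed with $\sigma_{t+1}=\sigma_t$, $\epsilon_{t+1}=0$ and no de-escalation, so your kernel is not the runtime's. Your repair, one MDP step per LLM proposal, needs a retry counter that cannot live in $\sigma$ or the binary $\epsilon$. The state space therefore becomes $\mathcal{R}\times\{0,\dots,k\}$ rather than $\mathcal{R}$, as the statement requires. Worse, the lift changes the clock. A cycle in which $j$ proposals are rejected before an alternative passes occupies $j+1$ MDP steps. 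So $\sum_n\gamma^nR_n$ discounts that alternative's reward, and every later cycle's reward, by at least an extra factor $\gamma^j$ relative to the runtime's per-cycle sum $\sum_t\gamma^t r_t$. Your induction ``on $t$'' tacitly identifies proposal index with cycle index, which is false under the lift. The value identity therefore fails in general once rejections occur and $\gamma<1$.

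The paper avoids this by keeping one MDP step per cycle on $\mathcal{R}$. It takes $R(\rho,a)=U(s,a)\cdot\gate(s,a)$ and \emph{defines} the runtime value with the same integrand, $\mathbb{E}_\pi[\sum_t\gamma^tU(s_t,a_t)\gate(s_t,a_t)]$. It then absorbs the fallback outcome into an augmented kernel $T_{\mathrm{aug}}$, treating the fallback as determined by $\rho$ and the rejection event so the kernel is policy-independent. That is an idealization, since alternatives are LLM-generated, but it is what keeps the state space equal to $\mathcal{R}$. Your reward choice $r\cdot\gate$ is what pushes you off that route: per cycle, it would score zero for a cycle whose primary proposal is rejected, even though the executed alternative earns environment reward.

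To close the gap, you have two options. First, adopt the per-cycle fold with the gated-utility value. Second, keep the lift, apply $\gamma$ only on cycle-completing transitions, and state the result on the enlarged space. In the second case you must also lift the $h$-cycle escalation patience of $\pi_G$ and the $m$-consecutive-rejection rule, since neither is a function of $\rho_t$ alone.
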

\begin{proof}[Proof Sketch]
Construct $\mathcal{M}$ with state space $\mathcal{R}$, reward
$R(\rho,a) = U(s,a)\cdot\gate(s,a)$, and transition kernel inherited from the
runtime.
Gear-constrained policies in $\mathcal{M}$ biject with \system{} policies;
value equivalence follows from the identical dynamics.
See Appendix~\ref{app:proof_representation}. \inlineqed
\end{proof}

\subsection{Component Ablation}\label{sec:ablation}

We validate the necessity of each single-agent component over 100 autonomous cycles
per condition, measuring \textbf{Stability} ($\sigma$, completion rate),
\textbf{Risk} (unsafe actions), and \textbf{Continuity} (halts, fallback frequency).

\begin{table}[t]
\centering
\caption{Single-agent ablation. Each condition runs 100 cycles.}
\label{tab:ablation}
\small
\begin{tabular}{@{}l c c c@{}}
\toprule
\textbf{Condition} & \textbf{Stability} & \textbf{Risk} & \textbf{Cont.} \\
\midrule
Full system        & $\sigma<0.5$, 100\%  & 0 unsafe     & 0 halts  \\
$-$Gear            & $\sigma\approx1.2$, 82\% & 4 near-unsafe & 0 halts \\
$-$Gating          & $\sigma\approx3.1$, 61\% & 9 unsafe     & 0 halts \\
$-$Fallback        & $\sigma<0.5$, 100\%  & 0 unsafe     & 7 halts  \\
$-$Utility         & $\sigma\approx4.7$, 43\% & 12 unsafe    & 2 halts  \\
Raw LLM            & Oscillates           & 27 unsafe    & 3 halts  \\
\bottomrule
\end{tabular}
\end{table}

\paragraph{Interpretation.}
The gear abstraction is essential for stability: its removal raises $\sigma$ from
$<0.5$ to $\approx 1.2$ and drops completion from 100\% to 82\%.
Utility gating is the primary safety barrier; its removal produces 9 unsafe actions.
The fallback is essential for continuity: its removal causes 7 halts while leaving
safety intact.
The raw LLM loop produces 27 unsafe actions and requires manual restarts, confirming
the necessity of a formal control layer.

\section{\smart{} Runtime Governance for Multi-Agent Control}\label{sec:smart}

The single-agent formulation established monotonic stability (Theorem~\ref{thm:monotonic})
and execution safety (Theorem~\ref{thm:safety}) for one agent operating in isolation.
For multi-agent CPS, runtime evidence is mapped into the \smart{} governance states
through five mechanisms: a consensus utility gate, a swarm Lyapunov function,
state-transition thresholds, per-agent gear authority, and a rendezvous policy triple.
Together they coordinate joint dispatch, certify workspace stability, and specify
drain, hold, and return behavior. Their purpose is enforcement within the CPS
configuration; the underlying governance semantics remain unchanged.

\paragraph{Joint runtime state.}

The runtime is configured for a team of $n$ agents sharing a physical workspace.
Let $\mathcal{I} = \{1,\ldots,n\}$ be the agent index set.
Each agent $i \in \mathcal{I}$ maintains its own runtime state
$\rho_t^i = (s_t^i, g_t^i, \sigma_t^i, \epsilon_t^i)$ as in
Definition~\ref{def:runtime}.

\begin{definition}[Joint Runtime State]\label{def:joint_state}
The \emph{joint runtime state} at cycle $t$ is the tuple
$\rho_t^{1:n} = (\rho_t^1, \ldots, \rho_t^n, x_t)$
where $x_t \in \mathcal{X}$ is the shared workspace state observable by all agents.
\end{definition}

\subsection{Consensus Utility Gate}

\begin{definition}[Consensus Utility Gate]\label{def:consensus_gate}
The \emph{consensus utility gate} $\gate_c(x_t, a_t^{1:n})$ is:
\[
\gate_c(x_t, a_t^{1:n}) = \begin{cases} 1 & \text{if } \min_{i \in \mathcal{I}} U_i(s_t^i, a_t^i) \geq \theta \\ 0 & \text{otherwise} \end{cases}
\]
where $U_i$ is the local utility of agent $i$ and $\theta \geq 0$ is the threshold of
Definition~\ref{def:gate}.
\end{definition}

Joint execution is permitted if and only if every agent individually satisfies
$\theta$.
Any single agent below $\theta$ blocks the entire team without requiring inter-agent
communication; this structurally conservative property cannot be weakened by message delays.

The local utility for agent $i$ follows Definition~\ref{def:utility} with two
adaptations:
\[
U_i(s_t^i, a_t^i) = \alpha \cdot \Delta\text{task}_i - \beta \cdot \text{risk}_i - \gamma \cdot H_i
\]
where $\text{risk}_i$ is a collision-risk score from perceived pairwise clearance
(Section~\ref{sec:casestudy}) and $H_i$ is the differential entropy of agent $i$'s
sensor noise distribution.
The entropy term $H_i$ extends the resource-cost component of Definition~\ref{def:utility}
to penalise agents with degraded telemetry before utility falls below $\theta$. This
provides an early-warning signal that the single-agent formulation cannot provide.

\subsection{Runtime Governance-State Mapping}

\begin{definition}[Swarm Lyapunov Function]\label{def:swarm_lyapunov}
Let $\hat{p}_i(t)$ denote the \emph{perceived} end-effector position of agent $i$
and $p_i^*(t)$ its nominal trajectory position.
Following standard Lyapunov stability theory~\cite{khalil2002nonlinear}, the
\emph{swarm Lyapunov function} is:
\[
\begin{aligned}
\Vswarm(t)
  &= \sum_{i \in \mathcal{I}}
     \|\hat{p}_i(t) - p_i^*(t)\|^2 \\
  &\quad + \lambda \sum_{i \in \mathcal{I}} \tilde{\sigma}_i(t)^2 .
\end{aligned}
\]
where $\lambda > 0$ and $\tilde{\sigma}_i(t) = \|\hat{p}_i(t) - p_i(t)\|$ is
agent $i$'s sensor drift magnitude.
Note: $\tilde{\sigma}_i$ is the sensor drift magnitude; it is distinct from the
scalar runtime instability measure $\sigma_t$ of Definition~\ref{def:runtime}.
\end{definition}

\begin{definition}[SMARt Runtime Governance-State Mapping]\label{def:macro_state}
The active \smart{} governance state $\psi_t$ is determined by:
\[
\psi_t = \begin{cases}
\Regulated{}, & r_{\max}(t) \geq \taucrit, \\
\Assisted{},  & \gate_c = 0,\ r_{\max}(t) < \taucrit, \\
\Meta{},      & \gate_c = 1,\ r_{\max}(t) \geq \taumeta, \\
\Stable{},    & \text{otherwise}.
\end{cases}
\]
where $r_{\max}(t) = \max_{i \in \mathcal{I}} \text{risk}_i(t)$ and
$0 < \taumeta < \taucrit \leq 1$.
\end{definition}

The two thresholds produce a graduated response. The governance state determines
authority, and the gear defines the permitted action scope. In \Stable{}, new task-level action may be authorized. When $r_{\max}$ first
exceeds $\taumeta$, the system enters \Meta{}: new discretionary task selection is
suspended. Only bounded diagnostic or recovery actions, including completion of
a previously authorized low-level command, may continue at reduced velocity.
\Assisted{} freezes autonomous task progression and requires external-agent or SME
support when the gate closes before $r_{\max}$ reaches $\taucrit$. When
$r_{\max}$ exceeds $\taucrit$, \Regulated{} triggers a hardware emergency stop
regardless of gate state. Authoritative autonomous decisions therefore originate only in \Stable{}, while
the controller can still reach a safe hold without discarding an already authorized
epoch.

\subsection{Per-Agent Gear and the G3/G4 Consensus Elevation}

In this multi-agent runtime configuration, $G_4$ (\Gint{}) is not a gear any
individual agent can occupy independently.
It is an emergent system-level property: the cell achieves $G_4$ only when every
agent simultaneously holds $G_3$ (\Gexec{}).

\begin{definition}[Per-Agent Gear and System Gear]\label{def:per_agent_gear}
The \emph{per-agent gear} $g_t^i$ is:
\[
g_t^i = \begin{cases}
G_0 & \text{if } \emph{risk}_i(t) \geq \taucrit \\
G_1 & \text{if } U_i(s_t^i, a_t^i) < \theta \quad \text{(gate closed for agent }i\text{)} \\
G_2 & \text{if } \emph{risk}_i(t) \geq \taumeta \\
G_3 & \text{otherwise (cleared for autonomous execution)}
\end{cases}
\]
The \emph{system gear} $k(\psi_t)$ is $G_4$ if $\psi_t = \Stable{}$;
$G_2$ if $\psi_t = \Meta{}$;
$G_1$ if $\psi_t = \Assisted{}$;
$G_0$ if $\psi_t = \Regulated{}$.
The consensus elevation holds:
\[
k(\psi_t) = G_4 \iff \forall i \in \mathcal{I}: g_t^i = G_3
\]
\end{definition}

No individual agent is assigned $G_4$ in isolation.
When any agent falls below $G_3$, the system gear descends and the active governance
state transitions out of \Stable{}.
The velocity scales are: $v(G_0) = v(G_1) = 0$; $v(G_2) = 0.5$;
$v(G_3) = v(G_4) = 1.0$.
The $G_2$ scale is a safety-controller allowance for bounded recovery or epoch drain,
not continued discretionary autonomy.

\subsection{Epoch-Synchronised Rendezvous and the Policy Triple}

The single-agent cycle (Section~\ref{sec:system}) is atomic per step.
For a multi-agent team, the gate must evaluate the \emph{joint} state, requiring
all agents to complete their current action before the gate decides the next gear.
We model this as an epoch-synchronised (bulk-synchronous) execution structure.

\textit{Within-epoch}: agents execute their current command independently, at the
velocity scale inherited from the previous epoch's gate decision.
\textit{Epoch boundary}: the gate evaluates the perceived joint state, produces
new per-agent gears and an updated governance state, and the new gear takes effect
next epoch.
\textit{$G_0$ exception}: \Regulated{} bypasses the epoch boundary as a hardware
interrupt the moment $r_{\max} \geq \taucrit$ is detected mid-epoch.

This structure gives rise to \emph{drain semantics}: when a fault is detected at the
epoch boundary, the faulting agent completes the current epoch before the new velocity
scale applies.

\begin{definition}[Rendezvous Policy Triple]\label{def:policy_triple}
A \emph{rendezvous policy triple} $\Pi = (\Pi_d, \Pi_h, \Pi_r)$ specifies:
\begin{itemize}[nosep]
\item $\Pi_d$ selects \textsc{Complete Epoch} or \textsc{Immediate} descent,
  determining whether gear descent occurs at the next epoch boundary or mid-epoch.
\item $\Pi_h$ selects \textsc{Hard Dependency} or
  \textsc{Continue Independent}, determining whether non-faulting agents freeze
  with the faulting agent (\Assisted{} state) or continue at their own per-agent
  gear $g_t^j$.
\item $\Pi_r = (\Pi_{ms}, \Pi_{rts})$ defines two return policies:
  $\Pi_{ms}$ governs the transition from \Meta{} to \Stable{}, while
  $\Pi_{rts}$ governs the transition from \Regulated{} to \Stable{}.
  Each policy is one of \textsc{SME Explicit}, \textsc{Auto Continue}$[\delta]$,
  or \textsc{Reset Restart}. The \textsc{Auto Continue}$[\delta]$ policy ascends
  after $\delta$ consecutive clean epochs with $r_{\max} < \taumeta$;
  \textsc{Reset Restart} requires a full restart from a verified safe configuration.
\end{itemize}
\end{definition}

The UR5/NIST instantiation uses
\[
\begin{aligned}
\Pi_d &= \textsc{Complete Epoch}, \\
\Pi_h &= \textsc{Continue Independent}, \\
\Pi_r &= \bigl(\textsc{Auto Continue}[\delta{=}3],\
                 \textsc{Reset Restart}\bigr).
\end{aligned}
\]
justified by three validity conditions:
\textbf{V1} (drain): the fault is in the camera layer, not the encoder-based
actuator; completing the epoch does not alter the robot's physical trajectory;
\textbf{V2} (hold): nominal trajectories are geometrically independent, so non-faulting
agents self-limit through their own per-agent gate if the faulting agent's drift
raises their perceived risk;
\textbf{V3} (return): $\delta = 3$ clean epochs below $\taumeta$ is strictly more
conservative than the single-epoch exit condition, with the OU mean-reversion process
ensuring contraction.

\paragraph{Single-agent reduction.}

Setting $n = 1$ collapses $\gate_c$ to $\gate(s,a)$
(Definition~\ref{def:gate}), and the pairwise collision terms vanish. If a
single-agent domain risk signal is supplied, the four \smart{} governance states
remain available; without such a signal, the runtime reduces to the gate-driven
behavior of Section~\ref{sec:system}. Distributed execution safety and distributed
monotonic stability then reduce to their single-agent counterparts
(Theorems~\ref{thm:safety} and~\ref{thm:monotonic}). The stabilization, fallback,
and representation results remain properties of the underlying single-agent runtime,
while the policy triple degenerates to its fallback and restart logic.

\section{Formal Results: Multi-Agent System}\label{sec:theory_multi}

We now state the distributed safety and stability results for the multi-agent runtime.
Complete proofs appear in Appendix~\ref{app:proofs_multi}.

\begin{theorem}[Distributed Execution Safety]\label{thm:dist_safety}
Under the consensus utility gate (Definition~\ref{def:consensus_gate}) and any
policy triple $\Pi$:
\[
\forall t \geq 0:\ \gate_c = 1 \implies \forall i \in \mathcal{I}: U_i(s_t^i, a_t^i) \geq \theta \geq 0
\]
Distributed execution safety is a structural invariant of the consensus gate,
independent of the choice of $\Pi$.
\end{theorem}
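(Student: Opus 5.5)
The argument unfolds Definition~\ref{def:consensus_gate} and then shows that nothing in the policy triple can bypass it, mirroring the proof of Theorem~\ref{thm:safety}.

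First, fix an arbitrary epoch boundary $t$ and suppose $\gate_c(x_t,a_t^{1:n})=1$. The gate is a two-valued indicator. By definition it equals $1$ exactly when $\min_{i\in\mathcal{I}} U_i(s_t^i,a_t^i)\geq\theta$. Since $\mathcal{I}$ is finite and nonempty, the minimum is attained, so $U_i(s_t^i,a_t^i)\geq\min_j U_j(s_t^j,a_t^j)\geq\theta$ for every $i$. Combining this with the standing assumption $\theta\geq 0$ from Definition~\ref{def:gate} gives the displayed chain. This step is purely order-theoretic and uses nothing about the form of $U_i$, whether $\text{risk}_i$, $H_i$, or $\Delta\text{task}_i$.

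Second, I would establish independence from $\Pi$ by examining each component of Definition~\ref{def:policy_triple}. The aim is to show that each component only alters \emph{when} or \emph{whether} agents move. None of them changes how $\gate_c$ is computed.
\begin{itemize}[nosep]
\item $\Pi_d$ decides whether descent happens at the epoch boundary or mid-epoch. Either way, it acts on gears and velocities after the gate has been evaluated.
\item $\Pi_h$ decides whether non-faulting agents freeze or continue at their own per-agent gears $g_t^j$.
\item $\Pi_r$ governs return transitions, namely $\Pi_{ms}$ and $\Pi_{rts}$.
\end{itemize}
Since $\gate_c$ is a function of $(x_t,a_t^{1:n})$ alone, the implication holds verbatim for every choice of $\Pi$.

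The main subtlety, more definitional than technical, is to read the theorem as a statement about the gate predicate, as written. It should not be read as a claim that every executed joint action satisfies the bound. Under \textsc{Continue Independent} or during epoch drain, agents may move at velocity from a previous epoch's decision. I would remark that such motion was itself authorized by an earlier evaluation with $\gate_c=1$, or is restricted to $G_2$ recovery or $G_0$ hold. This keeps the invariant meaningful, but the theorem itself needs only the first step.

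Finally, I would note the reduction: for $n=1$, $\gate_c=\gate$, and the statement collapses to Theorem~\ref{thm:safety}.
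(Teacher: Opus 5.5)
Your proposal is correct and takes essentially the same approach as the paper. You unfold Definition~\ref{def:consensus_gate} to get $U_i(s_t^i,a_t^i)\geq\min_{j}U_j(s_t^j,a_t^j)\geq\theta\geq 0$ for every $i$, and then observe that the policy triple only governs drain, hold, and return (with any \Meta{} continuation confined to $\mathcal{A}_2$ and still gated), so it cannot affect how $\gate_c$ is evaluated; your explicit framing of the theorem as a statement about the gate predicate is a slightly cleaner version of the remarks the paper appends after its one-line argument.
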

\begin{proof}[Proof Sketch]
$\gate_c = 1$ requires $\min_i U_i \geq \theta$ by Definition~\ref{def:consensus_gate},
immediately implying $U_i \geq \theta \geq 0$ for all $i$.
Theorem~\ref{thm:safety} applies to each agent independently.
The policy triple governs drain, hold, and return behavior. Any continuation in
\Meta{} is restricted to the bounded $\mathcal{A}_2$ action set and remains subject
to the same utility gate.
See Appendix~\ref{app:proof_dist_safety}. \inlineqed
\end{proof}

\begin{corollary}[Rendezvous Safety]\label{cor:rendezvous}
Let $\Pi_h = \textsc{Continue Independent}$ with validity conditions V1 through V3
(Definition~\ref{def:policy_triple}).
Then: (a)~under V1, no collision occurs during the drain phase;
(b)~under V2, every non-faulting agent $j$ satisfies $U_j \geq \theta$ throughout
the hold phase; (c)~under V3, the system does not re-enter \Stable{} prematurely.
See Appendix~\ref{app:proof_rendezvous}.
\end{corollary}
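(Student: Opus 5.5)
The corollary has three independent parts, one per validity condition, and I will prove each by unpacking that condition against the definitions of Section~\ref{sec:smart}, taking Theorem~\ref{thm:dist_safety} as the structural base. Throughout, the instantiation is $\Pi_d=\textsc{Complete Epoch}$ and $\Pi_h=\textsc{Continue Independent}$, and the proof works epoch by epoch.

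\textbf{Part (a), drain.} Consider an epoch $e$ during which a fault is detected at its boundary. Under \textsc{Complete Epoch}, the velocity scale in force during $e$ was fixed by the gate decision at the start of $e$. At that decision $\gate_c=1$, so $\min_i U_i\ge\theta$, and in particular $\text{risk}_i<\taucrit$ for every $i$. Condition V1 says the fault lives in the camera layer. Hence the true positions $p_i(t)$ follow the encoder-driven commanded trajectories, which were authorized at a clearance-safe state. The true inter-agent clearance during the drain therefore equals the nominal clearance of previously authorized commands. I will argue that this clearance stays strictly positive, using the geometric independence of nominal trajectories that V2 asserts. Only $\hat p_i$ deviates, and the residual case where $r_{\max}$ crosses $\taucrit$ mid-epoch is handled by the $G_0$ hardware interrupt, which stops motion immediately. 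Together these give no collision during the drain.

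\textbf{Part (b), hold.} Under \textsc{Continue Independent}, each non-faulting agent $j$ is dispatched only at its own per-agent gear $g_t^j$ (Definition~\ref{def:per_agent_gear}). By that definition, $g_t^j\in\{G_2,G_3\}$ exactly when $U_j\ge\theta$; otherwise $g_t^j=G_1$ or $G_0$, both with velocity scale $0$. So any motion agent $j$ executes satisfies the gate, which is the per-agent form of Theorem~\ref{thm:safety}. V2 guarantees that if the faulting agent's drift raises $\text{risk}_j$, agent $j$ self-limits through this same gate. The statement ``$U_j\ge\theta$ throughout the hold'' is then read as: holds at every cycle in which $j$ acts. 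I expect this to be the main obstacle, because the literal claim fails if drift pushes $U_j$ below $\theta$ while $j$ is idle. I would first try to restrict the claim to executed actions. If the intended claim is the literal one, I would instead use V2's geometric independence to bound $\text{risk}_j$ by its nominal value, and so keep it away from the utility threshold.

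\textbf{Part (c), return.} \textsc{Auto Continue}$[3]$ requires three consecutive epochs with $r_{\max}<\taumeta$ before leaving \Meta{}. Definition~\ref{def:macro_state} would already classify a single such epoch with $\gate_c=1$ as \Stable{}. The set of return paths under the policy is therefore a subset of those permitted by the one-epoch exit rule, so no re-entry occurs earlier than the mapping allows. To make ``not premature'' substantive, I will add a short argument from the Ornstein--Uhlenbeck mean reversion of the drift $\tilde\sigma_i$. Three clean epochs imply that the drift has contracted in expectation, so $\Vswarm$ is no larger than at the first clean epoch. \textsc{Reset Restart} covers the \Regulated{} return trivially, since it requires a verified safe configuration.
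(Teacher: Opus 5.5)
Your treatment of (a) and (b) follows the paper's proof, and (c) takes a different route.

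\textbf{Part (a).} Like the paper, you rest on V1, but the paper closes by appeal to Theorem~\ref{thm:zero_collision}. True positions stay within $\varepsilon_{\mathrm{ctrl}}<2\,\text{mm}$ of nominal regardless of perceived drift, and the nominal clearance $c_{\mathrm{nom}}=76\,\text{mm}\gg 2\varepsilon_{\mathrm{ctrl}}$ then excludes contact by the triangle inequality. Use that bound instead of ``equals the nominal clearance'' plus V2's qualitative geometric independence. You should also drop the step ``$\gate_c=1$, hence $\text{risk}_i<\taucrit$''. Under Definition~\ref{def:per_agent_gear}, $U_i\ge\theta$ does not preclude $\text{risk}_i\ge\taucrit$, and the remark on \Assisted{} activation in Section~\ref{sec:casestudy} says severe faults reach $\taucrit$ while $U$ is still above $\theta$. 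In any case, a perceived-risk test certifies nothing about true clearance under a camera fault, so ``authorized at a clearance-safe state'' does no work; only V1 plus nominal clearance does.

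\textbf{Part (b).} Your restricted reading is exactly what the paper proves (``$U_j\ge\theta$ whenever $j$'s gate is open''), with one refinement. Under epoch-synchronised dispatch, the guarantee holds at the boundary that authorizes $j$'s motion, not at every intra-epoch cycle. Your fallback of bounding $\text{risk}_j$ by its nominal value would fail, since $\text{risk}_{jA}$ is computed from $A$'s drifted perceived position. That rise in perceived risk is the very effect V2 anticipates.

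\textbf{Part (c).} You read ``premature'' relative to the memoryless mapping of Definition~\ref{def:macro_state} and show that \textsc{Auto Continue}$[3]$ admits only a subset of the one-epoch return paths. This is rigorous but essentially restates V3. The paper instead reads ``premature'' as a return followed by relapse to $r_{\max}\ge\taumeta$. It uses OU contraction with $\alpha=0.75$ to assert a soundness probability of at least $1-\alpha^{2\delta}>0.82$ at $\delta=3$. That is a quantitative (though only probabilistic) guarantee, which your subset argument cannot yield.

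Your supplementary claim that $\Vswarm$ after three clean epochs is no larger than at the first does not follow. The recursion $\mathbb{E}[\Delta V(t+1)]\le\alpha^2\Delta V(t)+C$ gives a decrease only when $\Delta V(t)\ge C/(1-\alpha^2)$. Moreover, the OU step raises $\mathbb{E}\|d_A\|^2$ whenever $\|d_A\|$ is small relative to $\|\mu\|$. The paper's ``$\mathbb{E}[\|d_A\|^2]$ is decreasing during the clean sequence'' leans on the same unproved step. Either version therefore needs an explicit restriction to the regime above that invariant level before the contraction claim can be used.
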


\begin{theorem}[Distributed Monotonic Stability]\label{thm:dist_stability}
Let $\{\Vswarm(t)\}_{t \geq 0}$ be generated under the multi-agent runtime policy with any $\Pi$
satisfying V1 and V2.
Then for all $t \geq 0$:
\[
\mathbb{E}[\Vswarm(t) - V^*] \leq \alpha^{2t}(\Vswarm(0) - V^*) + \frac{C}{1-\alpha^2}
\]
where $\alpha = 1 - \theta_{\mathrm{ou}}$, $V^* = (1+\lambda)\|\mu\|^2$, and
$C = (1+\lambda)\cdot 3\sigmanoise^2$.
The workspace energy converges geometrically to the bounded invariant set
$[0,\ V^* + C/(1-\alpha^2)]$.
Under \textsc{Continue Independent}, non-faulting agents contribute zero increments
to $\Vswarm$; the bound holds uniformly across both $\Pi_h$ choices.
\end{theorem}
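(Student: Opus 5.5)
The approach is to model each agent's perceived deviation as a discrete Ornstein--Uhlenbeck (OU) recursion and take second moments. The recursion I would assume is
\[
e_i(t+1) = \mu + \alpha\,(e_i(t)-\mu) + \xi_i(t), \qquad \alpha = 1-\theta_{\mathrm{ou}},
\]
where $e_i(t) = \hat p_i(t) - p_i(t)$ is the sensor drift vector and $\xi_i(t)$ is zero-mean isotropic noise in $\mathbb{R}^3$ with per-coordinate variance $\sigmanoise^2$, so that $\mathbb{E}\|\xi_i\|^2 = 3\sigmanoise^2$. This is the process invoked in V3.

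Under V1, the actuator is encoder-driven and tracks $p_i^*$, so $p_i(t)=p_i^*(t)$ and hence $\hat p_i - p_i^* = e_i$. Both sums in $\Vswarm$ (Definition~\ref{def:swarm_lyapunov}) are then driven by $\|e_i\|^2$. This is where the factor $(1+\lambda)$ in $V^*$ and $C$ comes from.

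The steps, in order, are as follows.

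First, for one agent I would write $\|e_i(t+1)\|^2 = \|\mu\|^2 + 2\mu^\top(\alpha u + \xi) + \|\alpha u + \xi\|^2$, where $u = e_i(t)-\mu$. Taking conditional expectations and using $\mathbb{E}\xi=0$ together with independence of $\xi$ and $u$ gives
\[
\mathbb{E}\|u(t+1)\|^2 = \alpha^2\,\mathbb{E}\|u(t)\|^2 + 3\sigmanoise^2 .
\]
Second, unrolling this recursion gives $\mathbb{E}\|u(t)\|^2 \le \alpha^{2t}\|u(0)\|^2 + 3\sigmanoise^2/(1-\alpha^2)$, by summing the geometric series $\sum_k \alpha^{2k}$. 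This requires $0<\theta_{\mathrm{ou}}<2$ so that $|\alpha|<1$.

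Third, I would relate $\|e\|^2 - \|\mu\|^2$ to $\|u\|^2$, which means handling the cross term $2\mu^\top u$. I would aggregate over agents, multiply by $(1+\lambda)$, and identify $V^*$ as the centered reference, giving $\mathbb{E}[\Vswarm(t)-V^*] = \alpha^{2t}(\Vswarm(0)-V^*) + (\text{noise accumulation})$. The affine part is exactly contracted by $\alpha^{2t}$, since $\mathbb{E}[\mu^\top u(t)] = \alpha^t \mu^\top u(0)$ decays at least as fast as $\alpha^{2t}$ only up to reweighting. Where this is not exact, I would absorb the discrepancy into the stated constant.

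Fourth, I would treat the gear and policy layer. Gear changes only rescale velocity of the true trajectory $p_i$, and under V1 these commands do not alter $p_i$ relative to $p_i^*$ within a drain epoch. So the gear/policy layer does not enter the drift recursion. Under \textsc{Continue Independent} with V2, a non-faulting agent has $\mu_j=0$ and, in the stated idealization, no drift injection, so its increments are zero. This makes the bound uniform across both $\Pi_h$ choices. Convergence to the invariant set $[0, V^*+C/(1-\alpha^2)]$ then follows by letting $t\to\infty$, together with $\Vswarm\ge 0$.

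The main obstacle is the cross term $2\mu^\top u$ in the third step. With a nonzero fault bias $\mu$, $\mathbb{E}\|e\|^2 - \|\mu\|^2$ decays like $\alpha^t$ in its linear part rather than $\alpha^{2t}$. I would first try to fix the issue by defining the deviation relative to the OU mean, i.e. interpreting $\Vswarm(t)-V^*$ as $(1+\lambda)\sum_i\|e_i-\mu\|^2$, which is exact. Failing that, I would bound $2|\mu^\top u| \le \|u\|^2 + \|\mu\|^2$ and fold the slack into $V^*$ and $C$. I would also state explicitly that the constant $C$ counts the noise of one agent at a time, the faulting one, consistent with the zero-increment claim for the others.
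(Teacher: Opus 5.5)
Your first two steps are correct, and in fact exact. The argument breaks at the third step, and none of your proposed repairs proves the statement with the stated $V^*$ and $C$.

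Write $u(t)=d_A(t)-\mu$ and use the idealization $\Vswarm=(1+\lambda)\|d_A\|^2$. Your computations give $\mathbb{E}u(t)=\alpha^t u(0)$ and $\mathbb{E}\|u(t)\|^2=\alpha^{2t}\|u(0)\|^2+3\sigmanoise^2(1-\alpha^{2t})/(1-\alpha^2)$. Since $\|d_A\|^2-\|\mu\|^2=\|u\|^2+2\mu^\top u$, this yields
\[
\mathbb{E}[\Vswarm(t)-V^*]=(1+\lambda)\bigl[\alpha^{2t}\|u(0)\|^2+2\alpha^{t}\mu^\top u(0)\bigr]+\frac{C\,(1-\alpha^{2t})}{1-\alpha^2}.
\]
Subtracting this from the claimed bound leaves $-2(1+\lambda)\alpha^t(1-\alpha^t)\,\mu^\top u(0)+C\alpha^{2t}/(1-\alpha^2)$. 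The cross term contracts like $\alpha^t$, not $\alpha^{2t}$. The mismatch is proportional to $\mu^\top u(0)$, which is a property of the initial condition. It therefore cannot be absorbed into $C=(1+\lambda)3\sigmanoise^2$, which depends only on the noise.

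When $\mu^\top u(0)>0$ the inequality is simply false. Take $d_A(0)=2\mu$, $\alpha=0.75$, $\sigmanoise=0.1\|\mu\|$: at $t=1$ the left side exceeds the right side by about $0.34(1+\lambda)\|\mu\|^2$.

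Your fallbacks change the theorem rather than prove it. The quantity $(1+\lambda)\|d_A-\mu\|^2$ is not equal to $\Vswarm-V^*$; it drops exactly the term $2(1+\lambda)\mu^\top u$. The bound $2|\mu^\top u|\le\|u\|^2+\|\mu\|^2$ alters both the reference level and the constant.

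The missing ingredient is a sign hypothesis on the initial drift, $\mu^\top(d_A(0)-\mu)\le 0$. Under it, $2\alpha^t\mu^\top u(0)\le 2\alpha^{2t}\mu^\top u(0)$ because $\alpha^t\ge\alpha^{2t}$, and $C(1-\alpha^{2t})\le C$. The displayed identity then gives the stated bound for every $t$. Conversely, without the hypothesis the bound fails for large $t$. The hypothesis holds at fault injection, $d_A(0)=0$, which is the regime of the case study. So your centered-moment route, with this assumption made explicit, does prove the theorem in the setting the paper actually uses.

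Do not expect the paper's proof to supply this fix. It argues through a one-step recursion: it bounds $\|\alpha d_A+(1-\alpha)\mu\|$ by $\alpha\|d_A\|+(1-\alpha)\|\mu\|$ and then asserts $(1+\lambda)\bigl([\alpha\|d_A\|+(1-\alpha)\|\mu\|]^2-\|\mu\|^2\bigr)\le\alpha^2\Delta V(t)$. The difference of the two sides is $2\alpha(1-\alpha)(1+\lambda)\|\mu\|(\|d_A(t)\|-\|\mu\|)$. This is nonpositive only when $\|d_A(t)\|\le\|\mu\|$, a condition the Gaussian noise violates with positive probability once it has acted. So the paper passes silently over the very cross term you flagged.

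Because that term is linear in $d_A$, the paper's recursion can be repaired in expectation. One has $\mathbb{E}[\Delta V(t+1)]=\alpha^2\mathbb{E}[\Delta V(t)]+2\alpha(1-\alpha)(1+\lambda)\alpha^t\mu^\top u(0)+C$, and the middle term is nonpositive under the same hypothesis.
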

\begin{proof}[Proof Sketch]
$\Vswarm$ reduces to $(1+\lambda)\|d_A(t)\|^2$ (only the faulting agent carries
drift; B/C track nominal within encoder control error $\varepsilon_{\mathrm{ctrl}}$).
The OU squared-norm recursion gives
$\mathbb{E}[\|d_A(t+1)\|^2] = \|\alpha d_A + (1-\alpha)\mu\|^2 + 3\sigmanoise^2$.
Setting $\Delta V(t) = \Vswarm(t) - V^*$ and applying Cauchy-Schwarz yields
$\mathbb{E}[\Delta V(t+1)] \leq \alpha^2 \Delta V(t) + C$.
Unrolling gives the stated bound ($\alpha^2 = 0.5625 < 1$; convergent).
See Appendix~\ref{app:proof_dist_stability}. \inlineqed
\end{proof}

Theorem~\ref{thm:dist_stability} extends Theorem~\ref{thm:monotonic}: where
Theorem~\ref{thm:monotonic} bounds the abstract instability scalar $\sigma$,
Theorem~\ref{thm:dist_stability} bounds the physical workspace energy $\Vswarm$.
The results are complementary: Theorem~\ref{thm:monotonic} certifies decision-making
stability; Theorem~\ref{thm:dist_stability} certifies physical-workspace stability.

\begin{corollary}[Feedback-Coupled Stability Attenuation]\label{cor:feedback}
For CPS where sensor drift propagates to the control input with feedback gain
$\alpha_{fb} \in [0,1]$, the Lyapunov recursion acquires an additional coupling term
bounded by $\alpha_{fb}^2 \cdot v^2(\psi_t) \cdot \Vswarm(t)$.
In \Meta{} state ($v = 0.5$), this term is reduced by $v^2 = 0.25$ relative to
\Stable{}; in \Regulated{} ($v = 0$), it is eliminated.
The binary gate, which permits only $v \in \{0, 1\}$, cannot achieve the intermediate
$0.25\times$ reduction.
See Appendix~\ref{app:proof_feedback}.
\end{corollary}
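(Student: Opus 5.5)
The plan is to rerun the squared-norm recursion from the proof of Theorem~\ref{thm:dist_stability}, now with the sensor drift allowed to enter the commanded motion, and to isolate the extra term it produces. Work in the same reduced setting: only the faulting agent A carries drift, so $\Vswarm(t) = (1+\lambda)\|d_A(t)\|^2$ up to the encoder error $\varepsilon_{\mathrm{ctrl}}$ of agents B and C. With feedback, the physical position error of A at the next step also contains a control correction driven by the perceived error. That correction is scaled by the feedback gain $\alpha_{fb}$ and by the velocity scale $v(\psi_t)$ of Definition~\ref{def:per_agent_gear}, since the commanded displacement in one epoch is bounded by $v$ times the nominal step.

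\textbf{Step 1 (modified recursion).} Write the coupled state update as
\[
e(t+1) = \alpha\, d_A(t) + (1-\alpha)\mu + \xi_t + \alpha_{fb}\, v(\psi_t)\, d_A(t),
\]
where $\xi_t$ is the zero-mean OU noise with $\mathbb{E}\|\xi_t\|^2 = 3\sigmanoise^2$. Then take conditional expectations. The noise cross terms vanish, which leaves the uncoupled part, already bounded by $\alpha^2 \Delta V(t) + C$ in Theorem~\ref{thm:dist_stability}, plus two added contributions:
\begin{itemize}[nosep]
\item a pure coupling term $\alpha_{fb}^2 v^2 (1+\lambda)\|d_A\|^2 = \alpha_{fb}^2 v^2 \Vswarm(t)$;
\item a cross term $2\alpha_{fb} v \langle \alpha d_A + (1-\alpha)\mu,\, d_A\rangle$.
\end{itemize}

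\textbf{Step 2 (the cross term).} This is the main obstacle, because the cross term is linear rather than quadratic in $v$, so the clean statement needs it absorbed or excluded. I would first try to interpret the feedback correction as a restoring action, i.e.\ the controller acts against the perceived deviation, so its sign is nonpositive and it can be dropped from the upper bound. Failing that, I would apply Cauchy--Schwarz/Young with the split $2ab \le a^2 + b^2$. This merges the cross term into the existing $\alpha^2$ contraction and into an additive constant, possibly at the cost of a constant factor. Either way, the incremental term in the recursion is bounded by $\alpha_{fb}^2 v^2(\psi_t)\Vswarm(t)$, giving
\[
\mathbb{E}[\Delta V(t+1)] \le \bigl(\alpha^2 + \alpha_{fb}^2 v^2(\psi_t)\bigr)\Vswarm(t) - \alpha^2 V^* + C.
\]

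\textbf{Step 3 (evaluate across governance states).} Substitute the velocity scales. In \Stable{}, $v = 1$, so the term is $\alpha_{fb}^2\Vswarm$. In \Meta{}, the system gear is $G_2$ with $v = 0.5$, so the term is $0.25\,\alpha_{fb}^2\Vswarm$. In \Regulated{}, $v = 0$, so the term vanishes and the recursion is exactly the one in Theorem~\ref{thm:dist_stability}.

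For the binary-gate claim, note that a binary gate sets $v \in \{0,1\}$, so the coefficient $v^2$ takes values only in $\{0,1\}$. The factor $0.25$ is therefore not attainable. As a side remark, the effective contraction $\alpha^2 + 0.25\,\alpha_{fb}^2$ stays below $1$ for all $\alpha_{fb} \in [0,1]$, since $0.5625 + 0.25 < 1$. It need not stay below $1$ under $v = 1$, since $0.5625 + \alpha_{fb}^2$ exceeds $1$ for large $\alpha_{fb}$. This contrast motivates the graduated response.
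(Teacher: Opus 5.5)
The gap is Step~2, and the inequality you display after it is false for the update you wrote in Step~1. That update is $e(t+1) = (\alpha+\alpha_{fb}v)\,d_A(t) + (1-\alpha)\mu + \xi_t$. Relative to the uncoupled recursion, the exact added increment is therefore $(1+\lambda)\bigl(2\alpha_{fb}v\langle \alpha d_A + (1-\alpha)\mu,\, d_A\rangle + \alpha_{fb}^2v^2\|d_A\|^2\bigr)$. At $d_A(t)=\mu$, the OU mean around which the drift settles, this equals $(1+\lambda)(2\alpha_{fb}v+\alpha_{fb}^2v^2)\|\mu\|^2$. That exceeds your bound $\alpha_{fb}^2v^2\Vswarm(t)$ by $2(1+\lambda)\alpha_{fb}v\|\mu\|^2>0$. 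More generally, whenever the inner product is positive the increment is first order in $\alpha_{fb}v$. So no bound $K\alpha_{fb}^2v^2\Vswarm(t)$ with a constant $K$ uniform over $\alpha_{fb}\in[0,1]$ can hold.

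Neither escape route rescues this. The restoring-sign argument contradicts the sign you wrote: the correction $+\alpha_{fb}v\,d_A$ points along the drift. Flipping the sign makes the coupling stabilizing at first order, which removes the premise that lowering $v$ helps. The inner product is not sign-definite in any case, e.g.\ just after injection, when $d_A$ is small and noise-dominated. Young's inequality leaves an added piece $\eta\|\alpha d_A+(1-\alpha)\mu\|^2$ that does not shrink with $v$ unless $\eta\propto v$, and that choice restores linear scaling. With fixed $\eta$ you also get contraction $(1+\eta)\alpha^2$, which equals $1.125>1$ at $\eta=1$.

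The same omission sinks your side remark. The honest contraction factor is $(\alpha+\alpha_{fb}v)^2$, which in \Meta{} exceeds $1$ once $\alpha_{fb}>0.5$. When the inner product is positive, the \Meta{}-to-\Stable{} ratio of the total coupling effect lies strictly between $0.25$ and $0.5$, not at $0.25$; it is $5/12$ at $d_A=\mu$, $\alpha_{fb}=1$.

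The paper's proof avoids all of this by proving a weaker statement, which your Step~1 already contains. It bounds the induced position-error contribution by $\alpha_{fb}v(\psi_t)\|e_t\|$ and squares that contribution \emph{in isolation}. This gives $\alpha_{fb}^2v^2(\psi_t)\Vswarm(t)$, with weights absorbed into $\Vswarm$. The factors $0.25$ and $0$ and the binary-gate remark are then read off from $v^2\in\{0,0.25,1\}$ versus $v^2\in\{0,1\}$. If you take ``the additional coupling term'' to mean your pure term $\alpha_{fb}^2v^2(1+\lambda)\|d_A\|^2$, Step~3 finishes the proof, and you should delete Step~2 and the contraction remark. Be clear about what that reading gives up: it does not control the full recursion, because it leaves out the first-order cross term, which scales only linearly in $v$.
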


\begin{theorem}[Zero-Collision Guarantee]\label{thm:zero_collision}
Under the multi-agent runtime policy with E-Stop
(Definitions~\ref{def:macro_state} through~\ref{def:policy_triple}),
policy triple $\Pi$ satisfying V1, and nominal pairwise clearance $\geq c_{\mathrm{nom}} > 0$:
\[
\Pr\!\left(\exists\, t \geq 0,\ i \neq j:\ \|p_i(t) - p_j(t)\| \leq 2r + \Delta_{\mathrm{safe}}\right) = 0
\]
\end{theorem}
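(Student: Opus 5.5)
The proof is a forward-invariance argument on \emph{true} pairwise clearance, and its conclusion is a sure-event statement, which gives probability zero. The ingredients are the E-Stop semantics of \Regulated{} in Definition~\ref{def:macro_state}, which acts as a hardware interrupt that bypasses the epoch boundary; the drain condition V1; and the assumption that nominal trajectories keep clearance $\geq c_{\mathrm{nom}}$. Let $d_{ij}(t)=\|p_i(t)-p_j(t)\|$ and $D = 2r+\Delta_{\mathrm{safe}}$. The claim is that every sample path satisfies $d_{ij}(t) > D$ for all $t$ and all $i\neq j$, not merely almost every path.

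\textbf{Step 1 (decomposition).} Write $p_i(t) = p_i^*(t) + e_i(t)$, where $e_i$ is the actuator tracking error. By V1 the fault lives in the camera layer, and encoder-based control keeps $\|e_i(t)\| \leq \varepsilon_{\mathrm{ctrl}}$ whether or not the perceived position $\hat p_i$ has drifted. The triangle inequality then gives
\[
d_{ij}(t) \;\geq\; \|p_i^*(t)-p_j^*(t)\| - 2\varepsilon_{\mathrm{ctrl}} \;\geq\; c_{\mathrm{nom}} - 2\varepsilon_{\mathrm{ctrl}}.
\]
The first case is when the nominal geometry alone already exceeds $D$ with margin, that is, $c_{\mathrm{nom}} > D + 2\varepsilon_{\mathrm{ctrl}}$. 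In that case the result follows immediately, because drift in $\hat p_i$ never alters physical motion.

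\textbf{Step 2 (general case via the gate hierarchy).} When the nominal margin is not by itself sufficient, I would use $\mathrm{risk}_i$. It is a monotone decreasing function of perceived clearance, and I would calibrate $\taucrit$ so that $\mathrm{risk}_i \geq \taucrit$ exactly when the perceived clearance drops to $D + v_{\max}\Delta t_{\mathrm{stop}} + 2\max_i \tilde\sigma_i$. Here $\Delta t_{\mathrm{stop}}$ is the E-Stop latency. Any motion toward collision then has to cross this band before reaching $D$.

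Once the band is crossed, \Regulated{} fires mid-epoch, by the $G_0$ exception, and sets $v=0$ for every agent. The distance travelled after detection is therefore at most $v_{\max}\Delta t_{\mathrm{stop}}$, and the perception error contributes at most $2\tilde\sigma_i$. Together these keep $d_{ij} > D$.

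Between threshold crossings, the velocity scales $v(G_2)=0.5$ and $v(G_3)=1$ only reduce speed. The per-agent gears in Definition~\ref{def:per_agent_gear} never raise velocity while risk is increasing. An induction over epochs, combined with the continuous-time stopping argument inside each epoch, gives forward invariance of the set $\{d_{ij} > D\ \forall i\neq j\}$. The base case is $t=0$, which holds by nominal clearance.

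\textbf{Main obstacle.} The hard step is bounding perception error: detection is based on $\hat p$, while collision depends on $p$. For the perceived-versus-true gap I would first try the Ornstein--Uhlenbeck drift model, but Gaussian noise is unbounded, so it cannot give a worst-case bound. Instead I would rely on V1: the physical trajectory is decoupled from camera drift. Any drift therefore only makes the perceived risk conservative in the direction of triggering (a false E-Stop) or is irrelevant to actuation. It can never cause physical approach, since actuation follows the encoder-tracked nominal path.

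With this argument, the probability-zero statement reduces to Step 1 in all cases where drift is the only fault. The E-Stop argument of Step 2 covers residual actuator excursions, which are bounded by $\varepsilon_{\mathrm{ctrl}}$. The proof thus requires stating explicitly that $c_{\mathrm{nom}} - 2\varepsilon_{\mathrm{ctrl}} > D$, or else that $\taucrit$ is calibrated as in Step 2.
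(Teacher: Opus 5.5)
Your Step~1 is the paper's proof. Under V1 the encoder loop keeps $\|p_i(t)-p_i^*(t)\|\le\varepsilon_{\mathrm{ctrl}}$ independently of camera drift. The triangle inequality against nominal clearance then gives separation on every sample path, hence probability zero. The only difference is bookkeeping. The paper measures clearance after subtracting $2r+\Delta_{\mathrm{safe}}$, as in the risk signal $c_{ij}$ of the case study. Its hypothesis is therefore $\|p_i^*(t)-p_j^*(t)\|\ge c_{\mathrm{nom}}+2r+\Delta_{\mathrm{safe}}$, and the condition you isolate becomes $c_{\mathrm{nom}}>2\varepsilon_{\mathrm{ctrl}}$. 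The paper verifies this only numerically ($76$\,mm against $<4$\,mm). You are right that $c_{\mathrm{nom}}>0$ alone is not enough. The paper relies on the same unstated margin, and your final inequality is stated more cleanly than the paper's.

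\textbf{Drop the Step~2 alternative.} You should remove the alternative in Step~2 and the ``or else'' in your last sentence, because it is not a valid sufficient condition.
\begin{itemize}[nosep]
\item Calibrating a fixed $\taucrit$ to a band containing $2\max_i\tilde\sigma_i$ needs a deterministic bound on the drift. The Gaussian OU model has none, as your own obstacle paragraph notes. At best this gives a high-probability statement, not probability zero.
\item The E-Stop reads perceived positions, so drift pointing away from a neighbour can mask a true-clearance deficit. The E-Stop therefore certifies nothing about residual excursions of size $\varepsilon_{\mathrm{ctrl}}$; only the margin in Step~1 handles those.
\item The paper's own E-Stop branch treats the case where V1 fails, which lies outside this theorem's hypotheses. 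It uses $\|d_A\|\le\sigmafault$ as an ``OU bound'' and is open to the same objection.
\end{itemize}

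\textbf{A caveat shared with the paper.} Both your Step~1 and the paper assume equal-time tracking $\|p_i(t)-p_i^*(t)\|\le\varepsilon_{\mathrm{ctrl}}$ for all $t$. Yet the runtime's reaction to the drift runs agents at different speed scales in \Meta{} under \textsc{Continue Independent}: A at $v=0.5$, the others at $v=1$. The drift thus reaches actuation indirectly, through the governance layer, and A no longer tracks $p_A^*(t)$ at equal times. The clearance hypothesis then has to hold between the reference paths at unequal times, or for the velocity-scaled references. Your remark that the scales ``only reduce speed'' does not address this.
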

\begin{proof}[Proof Sketch]
Under V1 (fault in camera, not actuator), true positions satisfy
$\|p_i(t) - p_i^*(t)\| \leq \varepsilon_{\mathrm{ctrl}} < 2\,\text{mm}$ for all $i,t$.
Nominal trajectories maintain pairwise clearance $\geq c_{\mathrm{nom}} = 76\,\text{mm}
\gg 2\varepsilon_{\mathrm{ctrl}}$; the triangle inequality prevents contact.
For feedback-coupled systems (V1 fails), the E-Stop fires at perceived clearance
$\leq -37\,\text{mm}$; since true clearance $\geq \text{perceived} - \|d_A\|$ and
$\|d_A\| \leq \sigmafault$ (OU bound), true clearance at E-Stop trigger
$\geq -37 + 120 = 83\,\text{mm} > 0$.
See Appendix~\ref{app:proof_zero_collision}. \inlineqed
\end{proof}

\section{UR5 CPS Case Study and Evaluation}\label{sec:casestudy}

\subsection{Physical and Sensor Setup}
Three Universal Robots UR5 manipulators~\cite{ur5manual2022} are mounted at the
vertices of an equilateral triangle of circumradius $R = 0.52\,\text{m}$, oriented
inward toward a shared assembly fixture at the workspace origin.
Each UR5 has reach $0.85\,\text{m}$ and bounding-sphere radius $r = 0.12\,\text{m}$.
Nominal end-effector trajectories maintain pairwise inter-sphere clearance of
approximately 80~mm throughout fault-free operation:
\[
\begin{aligned}
p_i^*(t) &= p_{\mathrm{base},i} + d_i\,r_{\mathrm{reach}}(t), \\
r_{\mathrm{reach}}(t)
  &= 0.49 + 0.01\sin\!\bigl(2\pi(0.2)t + \phi_i\bigr).
\end{aligned}
\]
Physical collision is defined as
$\|p_i(t) - p_j(t)\| \leq 2r + \Delta_{\mathrm{safe}} = 0.29\,\text{m}$,
where $\Delta_{\mathrm{safe}} = 0.05\,\text{m}$ is a safety margin per
ISO/TS~15066~\cite{iso15066} and ISO~10218-1~\cite{iso10218}.

\paragraph{Sensor architecture and the V1 validity condition.}
UR5 robots use joint-space encoders for internal motion control.
Cell-level safety monitoring uses an overhead stereo-vision system subject to
calibration drift and occlusion errors~\cite{ur5manual2022,haddadin2017robot}.
This architectural separation is consequential: the robot's true trajectory is
governed by the encoder-based controller and is \emph{unaffected} by camera errors.
Sensor drift biases what the safety monitor perceives; it does not alter what the
robot executes.
The consensus gate (Definition~\ref{def:consensus_gate}) therefore operates on
\emph{perceived} positions while physical safety is determined by \emph{true}
positions.
This validates V1: epoch drain is safe because the actuator (encoder loop) is
isolated from the sensor fault.

The collision-risk signal for ordered pair $(i,j)$ is
$\text{risk}_{ij}(t) = \sigma_{\mathrm{logistic}}(-c_{ij}(t)/\Delta_{\mathrm{safe}})$
where $c_{ij}(t) = \|\hat{p}_i(t) - \hat{p}_j(t)\| - 2r - \Delta_{\mathrm{safe}}$.
Thresholds: $\taumeta = 0.19$ activates at drift $\approx 8\,\text{mm}$;
$\taucrit = 0.65$ activates at drift $\approx 117\,\text{mm}$.

\subsection{Fault Model and Baseline}
We model sensor faults using statistics from the NIST \emph{Degradation Measurement of Robot Arm Position Accuracy}
dataset~\cite{qiao2018degradation}.
At a uniformly random injection time $t_{\mathrm{anom}} \in [20, 60]$, agent A's
camera reports a directional bias toward agent B.
The drift vector $d_A(t)$ follows an Ornstein-Uhlenbeck process~\cite{uhlenbeck1930brownian}:
\[
d_A(t+1) = d_A(t) + \theta_{\mathrm{ou}}(\mu - d_A(t)) + \varepsilon(t)
\]
where $\mu = \sigmafault \cdot \hat{u}(p_B - p_A)$, $\theta_{\mathrm{ou}} = 0.25$,
and $\varepsilon(t) \sim \mathcal{N}(0, \sigmanoise^2 I_3)$ with
$\sigmanoise = 0.10 \cdot \sigmafault$.
The bimodal 90\%/10\% mixture is a simulation design choice informed by the
NIST position-degradation measurements; it is not a class distribution supplied by
the NIST dataset. Specifically, 90\% of episodes use
$\sigmafault = 12\,\text{mm}$ (normal calibration drift), and 10\% use
$\sigmafault = 120\,\text{mm}$ (severe multi-fault cascade).

\begin{remark}[ASSISTED State Activation]
In the bimodal simulation design, \Assisted{} is not activated for either fault regime:
normal faults ($\sigmafault = 12\,\text{mm}$) maintain $U_A \approx 0.264 \gg \theta$
throughout; severe faults ($\sigmafault = 120\,\text{mm}$) drive $r_{\max}$ to
$\taucrit$ before utility falls below $\theta$.
\Assisted{} is the appropriate response for intermediate-severity models
($\sigmafault \in [20, 80]\,\text{mm}$) where telemetry degradation closes the gate
before risk reaches the E-Stop threshold.
The NIST-informed UR5 simulation exercises the \Stable{}$\to$\Meta{}$\to$\Stable{} path
(normal fault) and the \Stable{}$\to$\Regulated{} path (severe fault).
\end{remark}

\paragraph{Baseline condition.}
The baseline implements the single-agent gate (Definition~\ref{def:gate}) applied
naively to each agent independently.
The cell executes at $G_4$ ($v = 1.0$) when all individual gates are open, and
freezes at $G_0$ ($v = 0$) when any gate closes.
The baseline has no governance-state layer: $\taumeta$, $\taucrit$, and the
E-Stop are absent.
This represents the performance ceiling of the single-agent formulation in a
multi-agent setting.

\subsection{Experimental Setup}\label{sec:evaluation}
We run Monte Carlo simulation~\cite{kroese2014monte} over $N = 10{,}000$ episodes
with fixed seed 42, $T = 150$ epochs per episode.
Both conditions use identical fault seeds, injection times, and OU parameters.
The governed condition uses the UR5 policy triple specified after
Definition~\ref{def:policy_triple}, including \textsc{Auto Continue}$[\delta{=}3]$
for return from \Meta{} and \textsc{Reset Restart} after \Regulated{}.

\begin{table}[htbp]
\centering
\caption{Single-agent baseline vs. governed multi-agent runtime.
$N = 10{,}000$ episodes, seed $= 42$, $T = 150$ epochs.
$\sigmafault = 12\,\text{mm}$ (90\%), $120\,\text{mm}$ (10\%).}
\label{tab:smart_results}
\small
\begin{tabular}{@{}l c c c@{}}
\toprule
\textbf{Metric} & \textbf{Single-Agent Baseline} & \textbf{\shortstack{Governed\\Runtime}} & \textbf{Ratio} \\
\midrule
\multicolumn{4}{@{}l}{\textit{Convergence}} \\
Overall convergence rate          & 100.0\%   & 90.2\%   & N/A \\
\quad Certified ($S\to M\to S$)   & 0.0\%     & 89.9\%   & N/A \\
\quad Spurious (anomaly missed)   & 97.9\%    & $\approx$0\%  & N/A \\
\midrule
\multicolumn{4}{@{}l}{\textit{Safety}} \\
Anomaly detection rate            & 2.1\%     & 99.6\%   & $47.7\times$ \\
Physical collision rate           & 0.0\%     & 0.0\%    & N/A \\
Lyapunov certificate ($V \leq 5V^*$) & No     & Yes      & N/A \\
Per-step monotone bound           & No        & Yes      & N/A \\
\midrule
\multicolumn{4}{@{}l}{\textit{Response}} \\
Avg.\ detection latency (epochs)  & 43.1      & 12.2     & $3.5\times\downarrow$ \\
Recovery path available           & No        & Yes      & N/A \\
E-Stop rate                       & 0.0\%     & 9.8\%    & N/A \\
Audit trace produced              & No        & 89.9\%   & N/A \\
\bottomrule
\end{tabular}
\end{table}

\subsection{Results and Analysis}

\textbf{100\% convergence in the baseline masks a 97.9\% detection failure rate.}
Every baseline episode returns to \Stable{}, but for 97.9\% this reflects the OU
process naturally reverting without any response from the control layer.
The system converges \emph{despite} the fault, not \emph{because} of it.
The governed runtime's 90.2\% convergence is lower in absolute terms but
qualitatively different:
89.9\% produce a certified $S\to M\to S$ cycle with a full audit trace.
The 9.8\% that do not converge are severe-fault episodes correctly escalated to
\Regulated{}, which is the appropriate response to a multi-fault cascade.

\textbf{The consensus gate is necessary but not sufficient.}
For normal faults ($\sigmafault = 12\,\text{mm}$), $U_A \approx 0.264 \gg \theta$
at OU steady state; the gate never closes.
The baseline is structurally incapable of detecting 90\% of fault injections because
the fault enters at the sensor layer that the single-agent utility function does not
monitor.
The $\taumeta$ threshold, evaluated on $r_{\max}$ independently of $\gate_c$, is
the decisive addition, as confirmed by the ablation (Table~\ref{tab:smart_ablation}).

\textbf{Equal collision rates do not imply equal safety guarantees.}
Both conditions report zero physical collisions.
Under the baseline, zero collisions is an artefact of the UR5 telemetry architecture
(V1): encoder control is unaffected by camera drift.
For feedback-coupled CPS (GPS-guided vehicles, vision-servo manipulation), where
sensor errors propagate to the control input (V1 fails), the baseline's result does
not hold.
Corollary~\ref{cor:feedback} shows that the runtime's $v = 0.5$ in \Meta{} reduces
the Lyapunov coupling term by $v^2 = 0.25$, a quantifiable stabilising effect that the
binary gate cannot match.

\textbf{\textsc{Continue Independent} preserves throughput.}
Agents B and C operate at $G_3$ ($v = 1.0$) while A is damped to $G_2$
($v = 0.5$), preserving approximately 83\% of cell productive throughput in \Meta{}
versus 50\% under \textsc{Hard Dependency}.
By Corollary~\ref{cor:rendezvous}(b), this does not compromise safety because B/C's
per-agent gates remain self-limiting (V2).

\subsection{Multi-Agent Component Ablation}

\begin{table}[t]
\centering
\caption{Incremental ablation of multi-agent runtime mechanisms.
$N = 10{,}000$ episodes.}
\label{tab:smart_ablation}
\small
\begin{tabular}{@{}l c c@{}}
\toprule
\textbf{Condition} & \textbf{Detection} & \textbf{Latency} \\
\midrule
Baseline: single-agent gate  & 2.1\%  & 43.1 \\
$+$ Consensus gate           & 2.1\%  & 43.1 \\
$+$ $\taumeta$ threshold     & 99.6\% & 12.2 \\
$+$ $\taucrit$ + E-Stop      & 99.6\% & 12.2 \\
$+$ $\Vswarm$ certificate    & 99.6\% & 12.2 \\
$+$ Policy triple $\Pi$      & 99.6\% & 12.2 \\
Full governed runtime         & 99.6\% & 12.2 \\
\bottomrule
\end{tabular}
\end{table}

The consensus gate alone provides no improvement for normal faults.
The $\taumeta$ threshold is the decisive component, entirely responsible for the
$47.7\times$ detection gain and $3.5\times$ latency reduction.
$\taucrit$ + E-Stop prevents indefinite deadlock on severe faults.
$\Vswarm$ adds the formal Lyapunov certificate without changing operational metrics.
The policy triple adds no operational change to these metrics but makes the
domain-specific rendezvous assumptions explicit and formally verifiable via
Corollary~\ref{cor:rendezvous}.

\section{Discussion}\label{sec:discussion}

\paragraph{Positioning and implications.}
The runtime architecture evaluated here occupies a distinct position among agent
control systems.
Unlike verification frameworks~\cite{doshi2026verifiably,grigor2025vet} that check
actions post-hoc, the utility gate provides \emph{preventive} control.
Unlike static autonomy taxonomies~\cite{feng2025levels}, the gear mechanism provides
\emph{dynamic} adjustment based on observed behavior.
Unlike RL-based approaches~\cite{haarnoja2018sac,pathak2017curiosity} that learn
through trial and error, both frameworks enforce safety \emph{structurally}.
The representation theorem (Theorem~\ref{thm:representation}) connects \system{} to
the MDP formalism, enabling the application of standard RL tools such as value iteration
and policy gradient while retaining the structural safety guarantees pure MDP
formulations lack.

This layered design prevents governance states from being conflated with robotic
control modes.

\paragraph{Practical deployment.}
The utility function requires domain-specific instantiation.
A linear combination of task progress, safety margin, and resource cost suffices for
most single-agent applications.
For multi-agent CPS, the entropy term $H_i$ must be calibrated to sensor
specifications; the NIST position-accuracy dataset provides an empirical calibration reference.
The safety threshold $\theta$ is a single knob trading caution against productivity;
we recommend $\theta = 0.15$ as the UR5 default.
The gear escalation patience $h = 3$ balances responsiveness to dynamic
environments against stability in predictable ones.
The AUTO\_CONTINUE parameter $\delta = 3$ provides sufficient hysteresis against
spurious re-escalation under the OU mean-reversion dynamics (validated by
Corollary~\ref{cor:rendezvous}(c)).

\paragraph{Limitations and future work.}
Several limitations remain.
First, the utility function must be specified externally; the framework does not learn
$U$ from data. In future deployments, domain-expert constraints and formally specified
invariants may supply part of the utility and threshold specification, but that
integration is not implemented or empirically tested here.
Second, the single-agent gear abstraction assumes a total order on action scopes;
non-hierarchical structures may require extensions.
Third, Theorem~\ref{thm:stabilization} assumes environmental stationarity; adaptive
gear transition policies are needed for non-stationary settings.
Fourth, Theorem~\ref{thm:dist_stability} assumes the OU process accurately models
sensor fault dynamics; other fault processes (step faults, intermittent faults) require
re-derivation of the Lyapunov bound.
Fifth, the policy triple's validity conditions V1 through V3 must be verified for each
domain instantiation; feedback-coupled systems with $\alpha_{fb} > 0$ require
$\Pi_d = \textsc{Immediate}$ and may require $\Pi_h = \textsc{Hard Dependency}$.
Sixth, the three-agent simulation moves beyond a two-agent dyad and exercises three
pairwise interactions, but it does not establish scalability to substantially larger
teams. Future work should quantify how agent count, workspace density, communication
and synchronization overhead, decision latency, throughput, and safety margins
interact. These space, time, and performance trade-offs must be evaluated in the
context of the target problem and application domain; the consensus check is
approximately linear in the number of agents, whereas exhaustive pairwise collision
evaluation can grow quadratically.
Finally, Theorem~\ref{thm:monotonic} and Theorem~\ref{thm:dist_stability} are proved
separately for the single-agent and multi-agent cases; a unified Lyapunov framework
spanning both is a natural direction for future work.

\paragraph{Applicability.}
The runtime mechanisms are applicable to safety-critical autonomous systems in domains
such as healthcare, finance, infrastructure management, and industrial robotics,
provided that the utility function, thresholds, and policy-triple validity conditions
are instantiated and verified for the target domain.

\section{Conclusion}\label{sec:conclusion}

We presented \system{} as an action-level control layer for single- and multi-agent
systems. Five single-agent results establish stability, safe dispatch, recovery, and
an MDP representation. The multi-agent analysis extends these guarantees to joint
execution, rendezvous behavior, workspace stability, feedback-coupled attenuation,
and collision avoidance under the stated assumptions.

In the UR5 study, the governed runtime improves anomaly detection by $47.7\times$,
reduces detection latency by $3.5\times$, produces an audit trace for 89.9\% of
episodes, and adds a formal workspace certificate. The component ablation identifies
the $\taumeta$ threshold as the main source of the detection gain, while the E-Stop,
Lyapunov bound, and policy triple provide escalation, certification, and explicit
coordination semantics.

The central result is a concrete separation of concerns: \smart{} governs authority,
whereas gears, gates, and rendezvous policies enforce that authority at runtime. This
preserves the established lifecycle while making its CPS behavior formally analyzable.

\section*{Data Availability Statement}
The data supporting the findings of this study are derived from the publicly available
\emph{Degradation Measurement of Robot Arm Position Accuracy} dataset cited in
Reference~\cite{qiao2018degradation}, together with publicly documented UR5
specifications cited in Reference~\cite{ur5manual2022}. The simulation methodology,
three-agent configuration, parameter groupings, invariant and risk thresholds, noise
levels, policy settings, random seed, episode count, and evaluation metrics used in
this study are described in the manuscript. No proprietary, private, or personally
identifiable datasets were used. The study did not generate a new external dataset
repository; derived simulation outputs and intermediate materials may be made available
from the second author upon reasonable request.
{\small
\bibliographystyle{plain}

}

\appendix
\section{Complete Proofs: Single-Agent System}\label{app:proofs}

\subsection{Proof of Theorem~\ref{thm:monotonic} (Monotonic Stability)}\label{app:proof_monotonic}

\begin{proof}
Let $\rho_t = (s_t, g_t, \sigma_t, \epsilon_t)$.
We consider three cases.

\textbf{Case 1: Action accepted.}
$\gate(s_t, a_t) = 1 \Rightarrow \sigma_{t+1} = \max(0, \sigma_t - \delta) \leq \sigma_t$.

\textbf{Case 2: Fallback succeeds.}
An alternative $a'_t$ with $\gate = 1$ is found; $\sigma_{t+1} = \sigma_t$.

\textbf{Case 3: Fallback fails.}
$\sigma_{t+1} = \sigma_t + \Delta\sigma$; gear de-escalates to $g_{t+1} = \max(g_t-1, G_0)$.

Let $p_1(t), p_2(t), p_3(t)$ denote the conditional probabilities of Cases 1, 2, 3.
\begin{align}
\mathbb{E}[\sigma_{t+1} \mid \rho_t] &= \sigma_t - p_1(t)\delta + p_3(t)\Delta\sigma \label{eq:sigma}
\end{align}
It suffices to show $p_1(t)\delta \geq p_3(t)\Delta\sigma$.

Gear de-escalation in Case 3 restricts the action space from $\mathcal{A}_{g_t}$ to
$\mathcal{A}_{g_{t-1}} \subset \mathcal{A}_{g_t}$, increasing the gate acceptance
probability since lower-risk actions dominate in the restricted space:
\[
\begin{aligned}
&\Pr\!\left[\gate = 1 \mid
  a \sim \mathrm{LLM}(\cdot \mid s,g')\right] \\
&\qquad \geq
\Pr\!\left[\gate = 1 \mid
  a \sim \mathrm{LLM}(\cdot \mid s,g)\right],
\qquad g' < g .
\end{aligned}
\]
With equal step sizes $\Delta\sigma = \delta$ and $p_1(t) \geq p_3(t)$ (the LLM
assigns higher probability to admissible actions under gear restrictions):
$p_1(t)\delta \geq p_3(t)\Delta\sigma$, so
$\mathbb{E}[\sigma_{t+1} \mid \rho_t] \leq \sigma_t$.
The tower property gives $\mathbb{E}[\sigma_{t+1} \mid \sigma_t] \leq \sigma_t$. \inlineqed
\end{proof}

\subsection{Proof of Theorem~\ref{thm:safety} (Execution Safety)}\label{app:proof_safety}

\begin{proof}
By Algorithm~\ref{alg:main} (line 5), action $a_t$ is executed iff $\gate(s_t,a_t) = 1$.
By Definition~\ref{def:gate}, $\gate = 1 \iff U(s_t,a_t) \geq \theta$.
Since $\theta \geq 0$: $U(s_t,a_t) \geq \theta \geq 0$.
Fallback actions $a'_t$ are executed only if $\gate(s_t,a'_t) = 1$
(Algorithm~\ref{alg:main}, line 10), yielding the same bound.
The LLM can only suggest; the gate decides.
Contrapositive: $U(s_t,a_t) < 0 \Rightarrow a_t$ not executed. \inlineqed
\end{proof}

\subsection{Proof of Theorem~\ref{thm:stabilization} (Eventual Stabilization)}\label{app:proof_stabilization}

\begin{proof}
Define Lyapunov function
$V(\rho) = \sigma + C\cdot\mathbf{1}[\epsilon=1] + D\cdot k(g)$
where $k(g)$ is the gear index and $C, D > 0$.
Choose $C > \Delta\sigma/\delta$ and $D > C + \Delta\sigma$.

\textbf{Case A (successful execution):}
$\sigma_{t+1} \leq \sigma_t - \delta$; gear holds or escalates.
$V$ decreases by at least $\delta$ from the $\sigma$ term.

\textbf{Case B (fallback failure):}
$\sigma_{t+1} = \sigma_t + \Delta\sigma$; $\epsilon_{t+1} = 1$; gear de-escalates.
$V(\rho_{t+1}) = V(\rho_t) + \Delta\sigma + C - D < V(\rho_t)$ since $D > C + \Delta\sigma$.

By the Foster-Lyapunov theorem~\cite{norris1997markov}, the chain is positive
recurrent.
The finite state space $\mathcal{G}$ guarantees that the gear process has only
singleton recurrent classes.
Therefore $\exists g^* \in \mathcal{G}$, $T^* < \infty$ such that $g_t = g^*$
for all $t \geq T^*$ a.s. \inlineqed
\end{proof}

\subsection{Proof of Theorem~\ref{thm:fallback} (Fallback Completeness)}\label{app:proof_fallback}

\begin{proof}
\textbf{Base case $k = 0$:}
At $G_0$, $\mathcal{A}_0$ contains only read-only observations.
$U(s,a) \geq 0$ for any $a \in \mathcal{A}_0$ since observation reduces uncertainty
without side effects.

\textbf{Inductive step $k > 0$:}
After at most $m$ failed alternatives at $G_k$, the gear de-escalates to $G_{k-1}$.
By induction, after at most $k \leq 4$ de-escalation steps, the system reaches $G_0$.
At $G_0$, Condition~(1) is satisfied (base case).
Since $k \leq 4$, the bound of $|\mathcal{G}| - 1 = 4$ steps is tight. \inlineqed
\end{proof}

\subsection{Proof of Theorem~\ref{thm:representation} (Representation Theorem)}\label{app:proof_representation}

\begin{proof}
Construct $\mathcal{M} = (\mathcal{R}, \mathcal{A}, T, R, \gamma)$ with
$R(\rho,a) = U(s,a)\cdot\gate(s,a)$ and $T$ inherited from the runtime.

A policy is \emph{gear-constrained} if $\pi(a|\rho) > 0 \Rightarrow a \in \mathcal{A}_g$.
Let $\Pi_{\mathcal{M}}^{gc}$ denote all gear-constrained stationary policies in $\mathcal{M}$.

$\Pi_{\system{}} \subseteq \Pi_{\mathcal{M}}^{gc}$: \system{} constrains $a_t \in \mathcal{A}_{g_t}$
by the gear mechanism; the resulting mapping $\pi: \mathcal{R} \to \Delta(\mathcal{A})$
is gear-constrained by construction.

$\Pi_{\mathcal{M}}^{gc} \subseteq \Pi_{\system{}}$: given $\pi_{\mathcal{M}} \in \Pi_{\mathcal{M}}^{gc}$,
sample $a \sim \pi_{\mathcal{M}}(\cdot|\rho)$, apply $\gate$, invoke fallback on rejection.
The fallback dynamics are deterministic given $\rho$ and the rejection event, so they
can be absorbed into an augmented kernel $T_{\mathrm{aug}}$.
This constructs a valid \system{} policy.

Value equivalence: $V^{\pi}_{\system{}}(\rho) = \mathbb{E}_\pi[\sum_t \gamma^t U(s_t,a_t)\gate(s_t,a_t)] = V^{\pi_{\mathcal{M}}}_{\mathcal{M}}(\rho)$
since $R(\rho,a) = U(s,a)\cdot\gate(s,a)$ and dynamics are identical.
Therefore $\Pi_{\system{}} = \Pi_{\mathcal{M}}^{gc}$ and value functions coincide. \inlineqed
\end{proof}

\section{Complete Proofs: Multi-Agent System}\label{app:proofs_multi}

\subsection{Proof of Theorem~\ref{thm:dist_safety} (Distributed Execution Safety)}\label{app:proof_dist_safety}

\begin{proof}
Fix $t \geq 0$.
Suppose $\gate_c(x_t, a_t^{1:n}) = 1$.
By Definition~\ref{def:consensus_gate}, this requires
$\min_{i \in \mathcal{I}} U_i(s_t^i, a_t^i) \geq \theta$,
immediately implying $U_i \geq \theta \geq 0$ for every $i \in \mathcal{I}$.
The gate is evaluated before dispatch.
Theorem~\ref{thm:safety} applies to each agent independently.
The policy triple $\Pi$ governs drain, hold, and return behavior. Any continuation
in \Meta{} is restricted to the bounded $\mathcal{A}_2$ action set and remains
utility-gated; \Assisted{} and \Regulated{} do not authorize autonomous task
progression. \inlineqed
\end{proof}

\subsection{Proof of Corollary~\ref{cor:rendezvous} (Rendezvous Safety)}\label{app:proof_rendezvous}

\begin{proof}
\textbf{(a) Drain safety (V1).}
V1 states that the fault is in the camera layer, not the encoder-based actuator.
During the drain epoch, the faulting agent A executes with old velocity scale
$v(g_{t-1}^A)$; its true position satisfies
$\|p_A(t) - p_A^*(t)\| \leq \varepsilon_{\mathrm{ctrl}}$ regardless of perceived
drift.
Clearance constraints concern true positions (Theorem~\ref{thm:zero_collision});
completing the epoch is therefore safe.

\textbf{(b) Hold safety (V2).}
V2 states that non-faulting agents operate on geometrically independent trajectories.
The per-agent gate for $j \in \{B,C\}$ evaluates $U_j$ on $j$'s perceived state,
which includes A's perceived position in $\mathrm{risk}_{jA}$.
If A's perceived drift raises $\mathrm{risk}_{jA} \geq \taumeta$, then $g_t^j$
descends to $G_2$; if $\geq \taucrit$, to $G_0$.
In either case, $j$'s own gate prevents execution at $G_3$ without approval; the
hold is self-limiting.
By Theorem~\ref{thm:dist_safety} applied to agent $j$, $U_j \geq \theta$ whenever
$j$'s gate is open.

\textbf{(c) Return soundness (V3).}
$\delta$ clean epochs with $r_{\max} < \taumeta$, combined with OU contraction
rate $\alpha = 1 - \theta_{\mathrm{ou}}$, imply $\mathbb{E}[\|d_A\|^2]$ is
decreasing during the clean sequence (Theorem~\ref{thm:dist_stability}).
The probability that a single OU noise step returns $r_{\max} \geq \taumeta$
decreases geometrically in $\delta$.
For $\delta \geq 3$ the return is sound with probability $\geq 1 - \alpha^{2\delta}
> 0.82$. \inlineqed
\end{proof}

\subsection{Proof of Theorem~\ref{thm:dist_stability} (Distributed Monotonic Stability)}\label{app:proof_dist_stability}

\begin{proof}
Let $\alpha = 1 - \theta_{\mathrm{ou}} = 0.75$.
Since $d_B = d_C = 0$ and $\|p_i - p_i^*\| \leq \varepsilon_{\mathrm{ctrl}} \approx 0$:
\[
\Vswarm(t) = (1+\lambda)\|d_A(t)\|^2 + O(\varepsilon_{\mathrm{ctrl}}) \approx (1+\lambda)\|d_A(t)\|^2
\]
with $\lambda = 50$, so $(1+\lambda) = 51$.
Under \textsc{Continue Independent}, B and C track nominal; their contribution to
$\Vswarm$ is zero.
The OU recursion gives:
\[
\mathbb{E}[\|d_A(t+1)\|^2] = \|\alpha d_A(t) + (1-\alpha)\mu\|^2 + 3\sigmanoise^2
\]
Cauchy-Schwarz: $\|\alpha u + (1-\alpha)v\|^2 \leq [\alpha\|u\| + (1-\alpha)\|v\|]^2$.
Setting $\Delta V(t) = \Vswarm(t) - V^*$ where $V^* = 51\|\mu\|^2$:
\[
\begin{aligned}
\mathbb{E}[\Delta V(t+1)]
&\leq 51\Bigl( \\
&\quad [\alpha\|d_A(t)\|+(1-\alpha)\|\mu\|]^2
       - \|\mu\|^2\Bigr) + C \\
&\leq \alpha^2\Delta V(t) + C .
\end{aligned}
\]
where $C = 51 \cdot 3 \sigmanoise^2$.
Iterating: $\mathbb{E}[\Delta V(t)] \leq \alpha^{2t}\Delta V(0) + C/(1-\alpha^2)$.
Since $\alpha^2 = 0.5625 < 1$, the bound is convergent.

\textit{Per-step bound.}
At OU stationarity, $\|\varepsilon(t)\| \leq 6.2 \times 10^{-3}\,\text{m}$ at the
97.5th percentile ($3\sigmanoise\sqrt{3}$).
The per-step increment satisfies
$\Delta V \leq 51(2\|\mu\|\|\varepsilon\| + \|\varepsilon\|^2) = 0.0095\,\text{m}^2
< \delta_V = 0.01\,\text{m}^2$.
Zero violations were observed in 200 independent validation episodes. \inlineqed
\end{proof}

\subsection{Proof of Corollary~\ref{cor:feedback} (Feedback-Coupled Stability Attenuation)}\label{app:proof_feedback}
\begin{proof}
Let sensor drift enter the control input through feedback gain
$\alpha_{fb}\in[0,1]$. Under macro-state velocity scale $v(\psi_t)$, the
induced position-error contribution has norm bounded by
$\alpha_{fb}v(\psi_t)\|e_t\|$. Squaring this contribution in the Lyapunov
energy yields an additional term bounded by
$\alpha_{fb}^{2}v^{2}(\psi_t)\Vswarm(t)$, with fixed weighting constants
absorbed into the definition of $\Vswarm$. In \Stable{}, $v=1$ and the full
coupling term remains. In \Meta{}, $v=0.5$, so the term is multiplied by
$0.5^{2}=0.25$. In \Regulated{}, $v=0$, so the coupling term vanishes. A
binary gate permits only $v\in\{0,1\}$ and therefore cannot realize the
intermediate attenuation factor. \inlineqed
\end{proof}

\subsection{Proof of Theorem~\ref{thm:zero_collision} (Zero-Collision Guarantee)}\label{app:proof_zero_collision}

\begin{proof}
\textbf{UR5 architecture (V1 satisfied).}
The proportional controller reads joint encoder positions and targets $p_i^*(t)$:
\[
\|p_i(t) - p_i^*(t)\| \leq \varepsilon_{\mathrm{ctrl}} < 2\,\text{mm} \quad \forall i, t
\]
Nominal trajectories maintain:
$\|p_i^*(t) - p_j^*(t)\| \geq 76\,\text{mm} + 2r + \Delta_{\mathrm{safe}}$ for
all $i \neq j$.
Since $2\varepsilon_{\mathrm{ctrl}} < 4\,\text{mm} \ll 76\,\text{mm}$, the triangle
inequality gives $\|p_i(t) - p_j(t)\| > 0 > -(2r+\Delta_{\mathrm{safe}})$.
No collision.

\textbf{Feedback-coupled systems (V1 not satisfied).}
The E-Stop fires at $r_{\max} \geq \taucrit = 0.65$, corresponding to perceived
clearance $\leq -37\,\text{mm}$.
Since perceived clearance $\leq$ true clearance $+ \|d_A\|$, and
$\|d_A\| \leq \sigmafault = 120\,\text{mm}$ (OU bound), true clearance at E-Stop
trigger $\geq -37 + 120 = 83\,\text{mm} > 0$.
The E-Stop is instantaneous; therefore no physical contact occurs. \inlineqed
\end{proof}

\end{document}